\newcommand{\githubLink}{\url{https://github.com/el-hult/lal}}
\newtheorem{theorem}{Theorem}
\newtheorem{corollary}{Corollary}
\newtheorem{remark}{Remark}
\newtheorem{example}{Example}
\newcommand\R{\mathbb{R}}
\newcommand\E{\mathbb{E}}
\newcommand\normDist[2]{\mathcal{N}\left(#1 \,,\, #2\right)}
\newcommand\normDistDensity[3]{\mathcal{N}\left(#1 \,;\, #2 \,,\, #3\right)}
\DeclarePairedDelimiter{\abs}{\lvert}{\rvert}
\newcommand\T{\mathsf{T}}
\DeclarePairedDelimiter{\ceil}{\lceil}{\rceil}
\def\munderbar#1{\underline{\sbox\tw@{$#1$}\dp\tw@\z@\box\tw@}}
\newcommand{\confLevel}{\alpha}
\newcommand{\covFraction}{\beta}
\newcommand{\nDataNew}{m}
\newcommand{\nDataCal}{n}
\newcommand{\nDataModel}{n_0}
\newcommand{\dataSetCal}{\mathcal{D}}
\newcommand{\dataSetForModel}{\mathcal{D}_0}
\newcommand{\modelDist}{p_0}
\newcommand{\evalDist}{p}
\newcommand{\dataPoint}{z}
\newcommand{\dataPointRV}{Z}
\newcommand{\dataPointCalRV}{Z^{c}}
\newcommand{\dataPointNewRV}{Z}
\newcommand{\features}{x}
\newcommand{\featuresRV}{X}
\newcommand{\labely}{y}
\newcommand{\labelyRV}{Y}
\newcommand{\loss}{\ell}
\newcommand{\lossCalRV}{L^{c}}
\newcommand{\lossNewRV}{\lossRV}
\newcommand{\lossRV}{L}
\newcommand{\sortedLoss}[1]{J_{(#1)}}
\newcommandx{\lossLevel}[3][1=\confLevel{},2=\covFraction{},3=\dataSetCal{}]{\loss_{#1}^{#2}(#3)}
\newcommand{\lossSetNew}{\mathcal{L}}
\newcommand{\lossSetCal}{\mathcal{L}^{c}}
\newcommand\model{f}
\newcommand\weightDecayParam{\lambda}
\renewcommand\Pr{\mathbb{P}}
\newcommand\LAL{\textsc{lal}}
\newcommand{\conformalVec}{W}
\newcommand{\conformalVecSpace}{\mathcal{W}}
\title{Diagnostic Tool for Out-of-Sample \\ Model Evaluation}
\author{\name Ludvig Hult \email ludvig.hult@it.uu.se\\
      \addr Department of Information Technology\\
      Uppsala University
      \AND
      \name Dave Zachariah \email dave.zachariah@it.uu.se \\
      \addr Department of Information Technology\\
      Uppsala University
      \AND
      \name Petre Stoice \email ps@it.uu.se\\
      \addr Department of Information Technology\\
      Uppsala University}
\begin{document}

\maketitle

\begin{abstract}

Assessment of model fitness is a key part of machine learning.
The standard paradigm of model evaluation is analysis of the average loss over future data.
This is often explicit in model fitting, where we select models that minimize the average loss over training data as a surrogate, but comes with limited theoretical guarantees.
In this paper, we consider the problem of characterizing a batch of out-of-sample losses of a model using a calibration data set.
We provide  finite-sample limits on the out-of-sample losses that are statistically valid under quite general conditions and propose a diagonistic tool that is simple to compute and interpret. 
Several numerical experiments are presented to show how the proposed method quantifies the impact of distribution shifts, aids the analysis of regression, and enables model selection as well as hyperparameter tuning.
\end{abstract}

\section{Introduction}

Fitting a model to data is a central task in machine learning, signal processing, statistics and other areas \citep{bishop_pattern_2006,hastie_elements_2009,soderstrom_system_2019,kay_fundamentals_1993,fitzmaurice_applied_2011}.
A fitted model $\model$ can be assessed by considering a loss function $\loss(\cdot)$ that evaluates the model on future data points. This is called \emph{out-of-sample} analysis, since it considers data points beyond those in the training data sample.

Classical statistical models are often assessed by verifying assumption validity through statistical or graphical tools, a process called model diagnostics, diagnostic checks, or regression diagnostics \citep{ruppert_semiparametric_2003,belsley1980regression}. The term has also been established in a wider sense for general checks to verify a model fitness for given data, see e.g. \citet{casper2022diagnostics,zhang2022drml}. We propose a diagnostic tool that helps evaluating the model performance on out-of-sample data.

In this paper, we consider the problem of characterizing $\nDataNew$ out-of-sample losses of a model $\model$ using a calibration data set. The choice of $\nDataNew$ will depend on the application. In online learning, $\nDataNew=1$ is reasonable since the model may only be valid for a single prediction. In problems with low data-volumes, such as a predictive health-care model to be used on e.g. $30$ future patients with a rare desease, $\nDataNew=30$ may be appropriate. For models in high volume applications, such as e-commerce, the limit $\nDataNew\to\infty$ may be relevant. We derive finite-sample \emph{limits} on the out-of-sample losses that are statistically valid under quite general conditions and for any of the choices of $\nDataNew$. An illustration is provided in Fig.~\ref{fig:showcase}, where we fit a model for predicting house prices to training data and evaluate its absolute prediction error on a calibration data set $\dataSetCal$. 
The average loss on $\dataSetCal$ is a form of `cross-validation' and is indicated in the figure. The figure also illustrates  the proposed diagnostic statistic: an upper bound $\lossLevel$ on the $\covFraction$-fraction of $\nDataNew$ yet unobserved prediction errors that holds with confidence level $1-\confLevel$. By computing this statistical limit $\lossLevel$ as function of $\confLevel$, we quantify how probable different out-of-sample losses are for the model $\model$. Since this quantification is valid for any size of $\dataSetCal$, the  limit can be employed as a  diagnostic tool also in cases where calibration data is scarce or costly to obtain.
Moreover, as the validity does not depend on the distribution of the data used to train $\model$, the limit can also be used to analyze the severity of distribution shifts, as we will illustrate in the numerical experiments below.

\begin{figure}
    \centering
    \includegraphics[width=6in]{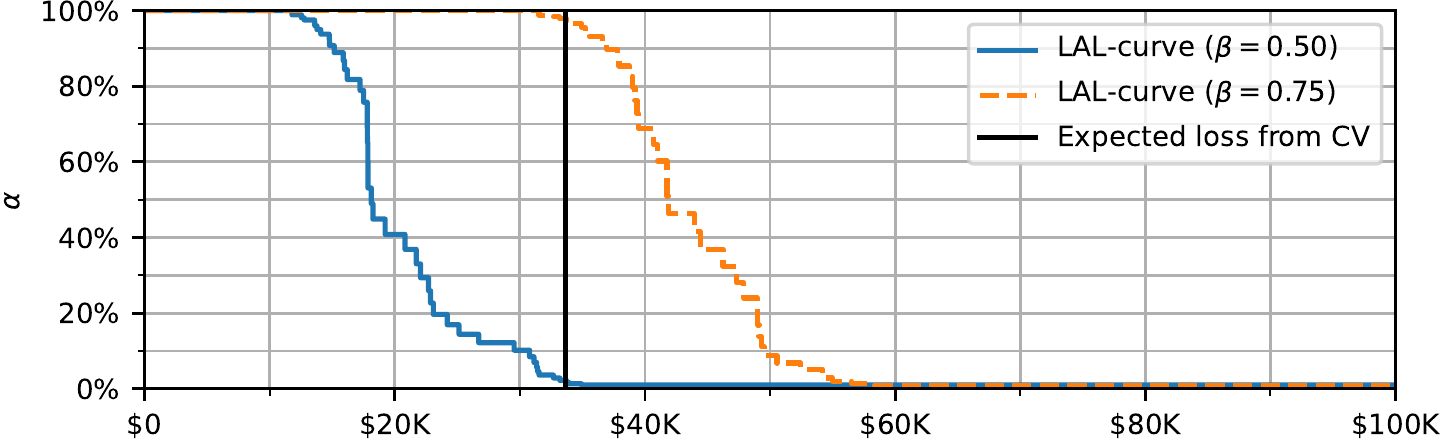}
    \caption{Model error diagonistic.
    Consider a predictive model $\model(\featuresRV)$ of house price $\labelyRV$ in an area described by a feature vector $\featuresRV$. How will this model perform in $\nDataNew$ future price predictions?  We evaluate the model using the prediction error $\loss(\featuresRV,\labelyRV) = |\labelyRV - \model(\featuresRV)|$ as the chosen loss function. The cross-validation method (CV) evaluates the model by estimating the expected out-of-sample  loss, which in this case is circa $\$36$K (indicated by black vertical line). Being an estimate of an expectation, it provides limited information about what individual loss values might be observed. Suppose the model will be used in $\nDataNew=100$ prediction instances, and we want to bound at least $\covFraction{}=50\%$ of these future losses. Then the \LAL{}-curve infers that with probability at least $80\%$ (level $\confLevel=20\%$), this fraction of prediction errors will not exceed $\$25$K (see blue solid curve)
    For a stronger guarantees, e.g. bounding  at least $\covFraction{}=75\%$ of the future losses, the \LAL{}-curve indicates that prediction errors of $\$48$K must be tolerated at level $\confLevel=20\%$ (see orange dashed curve).
    Full details for this experiment are provided in Sec.~\ref{sec:tolerance}.
    }
    \label{fig:showcase}
\end{figure}

The rest of the paper is organized as follows. We first formalize the general problem of interest, then propose a measure $\lossLevel$ which we refer to as the level-$\confLevel$ limit (\LAL) on the $\covFraction$-fraction of $\nDataNew$ out-of-sample losses and prove its statistical guarantees. This is followed by a series of numerical experiments that demonstrate the utility of \LAL{}. In the closing discussion, the proposed method is related to existing literature.

\section{Problem Statement}

Let $\model$ denote a model fitted to data samples $\dataSetForModel$ drawn from a distribution $\modelDist$. We aim to quantify the performance of $\model$ on $\nDataNew$ out-of-sample data points $\{ \dataPointRV_1, \dots, \dataPointRV_{\nDataNew} \}$ drawn from distribution $\evalDist$, which may differ from $\modelDist$. The performance is quantified using any real-valued loss function $\loss(\dataPoint{})$ the user wants. We let upper case letters denote random variables, e.g., $\dataPointRV$, and let the lower case version, $\dataPoint$, represent their realization. 

\begin{example}[Density Estimation using a Gaussian model]
A data point is a vector $\dataPoint \in \R^K$ and we consider a Gaussian density model $\model(\dataPoint) = \normDistDensity{\dataPoint}{\widehat{\mu}}{\widehat{\Sigma}}$, with a fitted mean 
$\widehat{\mu}$ and covariance matrix $\widehat{\Sigma}$.
A common loss function is the negative log-likelihood $$\loss(\dataPoint) = -2\ln \model(\dataPoint) = (\dataPoint{}-\widehat{\mu})^\T \widehat{\Sigma}^{-1} (\dataPoint{}-\widehat{\mu}) + \ln |\widehat{\Sigma}|,$$
ignoring the constant and scaling by a factor 2.
\end{example}

\begin{example}[Regression]
A data point is a pair of features $\features$ and a label $\labely$, i.e., $\dataPoint=(\features, \labely)$.
The model $\model(\features)$ is any estimate of the conditional expectation function $\E[y|x]$. Example models include Gaussian process regressors, random forest regressors, and neural networks.
A common loss function is the squared-error loss $\loss(\features,\labely) = \abs*{\labely-\model(\features)}^2$.
\end{example}

We assume that we have access to a \emph{calibration data set} $\dataSetCal = \{\dataPointCalRV_1,\dots , \dataPointCalRV_{\nDataCal}\}$ with $\nDataCal$ samples drawn randomly from $\evalDist$, such that the combined data set of $\nDataCal + \nDataNew$ samples is exchangeable, i.e., the joint density of the random vector $(\dataPointCalRV_{1},\dots{},\dataPointCalRV_{\nDataCal},\dataPointNewRV_{1},\dots{},\dataPointNewRV_{\nDataNew})$ is invariant under relabeling of the data points. 
While the calibration data set $\{\dataPointCalRV_1,\dots , \dataPointCalRV_{\nDataCal}\}$ is available to use in computations, the out-of-sample data $\{ \dataPointRV_1, \dots, \dataPointRV_{\nDataNew} \}$ is future, yet unseen data.
Exchangeability includes the common independent and identically distributed (iid) data assumption as a special case. An example of exchangeable \emph{non}-iid data is sampling without replacement from a finite population.

The problem we consider is to characterize $\nDataNew$ unknown \emph{out-of-sample losses} 
\begin{equation}
\left\{ \loss(\dataPointRV_1), \dots, \loss(\dataPointRV_{\nDataNew})\right\}
\label{eq:outofsampleset}
\end{equation}
of the model $\model$. Specifically, we want to bound a fraction $\covFraction \in (0,1)$ of the $\nDataNew$ losses with a confidence $1-\confLevel$. That is, we want to find a statistical limit $\lossLevel$ on how large future losses we are likely to observe for the model:
\begin{equation} \label{eq:lal criterion}
    \Pr\Big[ \text{at least a fraction } \covFraction \text{ of losses } \loss(\dataPointNewRV_{i}) \text{ respects } \loss(\dataPointNewRV_{i}) \leq \lossLevel   \Big] \geq 1- \alpha
.\end{equation}
We will call such a value $\lossLevel$ the \emph{level-$\confLevel$ limit} for the $\covFraction$-fraction of $\nDataNew$ losses, or a \LAL{} for short. 

We call the graph of $\lossLevel$ versus $\confLevel$ a \emph{\LAL{}-curve}, as illustrated in, e.g., Fig.~\ref{fig:showcase}. By plotting the value $\lossLevel$ on the horizontal axis, we can visualize the tail behaviour of the out-of-sample losses for $\model$ in a transparent manner. Thus given a valid \LAL, we propose to use the \LAL-curve as a diagnostic tool for model evaluation.

\section{Method}
\label{sec:method}

For notational convenience, let $\lossNewRV_i = \loss(\dataPointNewRV_i)$  and arrange the $\nDataNew$ out-of-sample losses \eqref{eq:outofsampleset} in increasing order: $\lossNewRV_{(1)}\leq{}\lossNewRV_{(2)}\leq \dots \leq \lossNewRV_{(\nDataNew)}$. The criterion \eqref{eq:lal criterion} can be expressed compactly as
\begin{equation}
\boxed{
\Pr\left[\lossNewRV_{(\ceil{\nDataNew \covFraction})} >  \lossLevel \right] \leq \confLevel}
\label{eq:lal as ordinal}
\end{equation}
and our objective is to find a limit $\lossLevel$ for any given $\covFraction \in (0,1]$ and $\confLevel\in(0,1)$.

For the calibration data $\dataSetCal$, let $\lossCalRV_i = \loss(\dataPointCalRV_i)$ and define the set of losses $\{ \lossCalRV_1,\dots, \lossCalRV_{\nDataCal}\}$ and order statistics $\lossCalRV_{(1)}\leq{}\lossCalRV_{(2)}\leq \dots \leq \lossCalRV_{(\nDataCal)}$. We also define the special cases $\lossCalRV_{(0)}$ and $\lossCalRV_{(\nDataCal+1)}$ to be the infimum and supremum of the support of the distribution of $\{\lossCalRV_i\}_{i=1}^{\nDataNew}$, possibly $\pm \infty$.


\subsection{General LAL expression}
The following result holds generically; it assumes two or more continuously distributed random variables don't take exactly the same value. The probability of such an event is zero. To simplify the reading, we will skip the assertions of `almost surely' and `assuming no ties'.

\begin{theorem}
\label{thm:tolerance}
Let $\lossLevel = \lossCalRV_{(k^\star)}$, where 
\[
k^{\star} = \min \big\{ k \in \{1,\dots{},n+1 \} \: | \:
    a(k)
    \leq \confLevel \big\},
\]
\begin{equation}
\label{eq:a for tolerance lal} 
a(k) = \sum_{j=k}^{\nDataCal+1}
    \frac{\binom{\nDataCal-j+\nDataNew-\ceil{\nDataNew \covFraction}}{\nDataCal-j}\binom{j+\ceil{\nDataNew \covFraction}-1}{j}}{\binom{\nDataCal+\nDataNew}{\nDataNew}} 
    \end{equation}
This is a valid \LAL{}, satisfying \eqref{eq:lal as ordinal}.
\end{theorem}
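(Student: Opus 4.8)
The plan is to reduce the boxed criterion \eqref{eq:lal as ordinal} to an exact combinatorial identity for $\Pr\big[\lossNewRV_{(\ceil{\nDataNew\covFraction})} > \lossCalRV_{(k)}\big]$, show this probability equals $a(k)$, and then pick the smallest admissible index. Write $q := \ceil{\nDataNew\covFraction}$, so $1 \le q \le \nDataNew$ for $\covFraction \in (0,1]$. The first observation is a purely order-theoretic equivalence (valid with no ties): the event $\lossNewRV_{(q)} > \lossCalRV_{(k)}$ says that the $q$-th smallest out-of-sample loss exceeds the $k$-th smallest calibration loss, which holds if and only if at least $k$ of the $\nDataCal$ calibration losses lie strictly below $\lossNewRV_{(q)}$. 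Hence, with $\lossLevel = \lossCalRV_{(k)}$,
\[
\Pr\big[\lossNewRV_{(q)} > \lossCalRV_{(k)}\big] = \Pr\big[\text{at least } k \text{ calibration losses lie below } \lossNewRV_{(q)}\big].
\]

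Next I would exploit exchangeability. Since $(\dataPointCalRV_1,\dots,\dataPointCalRV_{\nDataCal},\dataPointNewRV_1,\dots,\dataPointNewRV_{\nDataNew})$ is exchangeable, applying the fixed map $\loss(\cdot)$ coordinatewise shows the combined loss vector $(\lossCalRV_1,\dots,\lossCalRV_{\nDataCal},\lossNewRV_1,\dots,\lossNewRV_{\nDataNew})$ of length $N := \nDataCal+\nDataNew$ is again exchangeable. Assuming no ties, the induced rank vector is then uniform over all $N!$ permutations, so the pattern of calibration/out-of-sample labels read off the sorted combined losses is uniform over the $\binom{N}{\nDataCal} = \binom{\nDataCal+\nDataNew}{\nDataNew}$ label sequences. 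The key computation is to count the patterns in which exactly $j$ calibration losses lie below $\lossNewRV_{(q)}$: such a pattern has, strictly before the $q$-th out-of-sample entry, exactly $q-1$ out-of-sample and $j$ calibration labels (interleaved in $\binom{j+q-1}{j}$ ways) and, strictly after it, the remaining $\nDataNew-q$ out-of-sample and $\nDataCal-j$ calibration labels (interleaved in $\binom{\nDataCal-j+\nDataNew-q}{\nDataCal-j}$ ways). Dividing the product of these counts by $\binom{\nDataCal+\nDataNew}{\nDataNew}$ recovers exactly the summand in \eqref{eq:a for tolerance lal}. This bookkeeping is the main obstacle, since one must handle the boundary indices $j\in\{0,\nDataCal\}$ correctly and verify that the $j=\nDataCal+1$ term contributes zero, so that the stated upper summation limit is harmless.

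Summing the per-$j$ probabilities over $j = k,\dots,\nDataCal+1$ then gives
\[
a(k) = \Pr\big[\text{at least } k \text{ calibration losses lie below } \lossNewRV_{(q)}\big] = \Pr\big[\lossNewRV_{(q)} > \lossCalRV_{(k)}\big],
\]
so $a(k)$ is precisely the left-hand side of \eqref{eq:lal as ordinal} when $\lossLevel = \lossCalRV_{(k)}$. Finally I would note that $a(k)$ is non-increasing in $k$ (the event shrinks as $k$ grows) and that $a(\nDataCal+1) = 0 \le \confLevel$, so $\{k : a(k) \le \confLevel\}$ is a nonempty up-set and $k^\star = \min\{k : a(k)\le\confLevel\}$ is well defined with $a(k^\star)\le\confLevel$. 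Taking $\lossLevel = \lossCalRV_{(k^\star)}$ yields $\Pr[\lossNewRV_{(q)} > \lossLevel] = a(k^\star) \le \confLevel$, which is \eqref{eq:lal as ordinal}; the convention $\lossCalRV_{(\nDataCal+1)} = \sup$ makes the boundary case $k^\star = \nDataCal+1$ consistent, the probability then being $0$.
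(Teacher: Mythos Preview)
Your argument for the continuous (no-ties) case is essentially the paper's: both reduce to the uniform distribution of calibration/out-of-sample labels along the sorted combined sample and count the configurations with exactly $j$ calibration points below $\lossNewRV_{(q)}$, arriving at the same summand $\binom{\nDataCal-j+\nDataNew-q}{\nDataCal-j}\binom{j+q-1}{j}/\binom{\nDataCal+\nDataNew}{\nDataNew}$. Your framing via ``at least $k$ calibration losses below $\lossNewRV_{(q)}$'' is equivalent to the paper's decomposition $\sum_{j\ge k}\Pr[\lossCalRV_{(j)}\le \lossNewRV_{(q)} < \lossCalRV_{(j+1)}]$; your explicit checks that $a(\nDataCal+1)=0$ and that $a$ is non-increasing are a nice touch the paper leaves implicit.

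The one genuine gap is that you only treat the no-ties case. The paper's proof is in two parts: after the continuous argument, it separately handles discrete losses, where ties occur with positive probability and the rank-uniformity argument fails as stated. There the paper constructs a continuous exchangeable vector $(\lambda_1,\dots,\lambda_{\nDataCal+\nDataNew})$ whose rounding $\overline{\lambda}_i$ recovers the discrete losses in distribution, and uses the monotonicity $(\overline{\lambda}_{(i)}>\overline{\lambda}_{(j)})\Rightarrow(\lambda_{(i)}>\lambda_{(j)})$ to transfer the bound $a(k^\star)\le\confLevel$ from the continuous surrogate. Without some such step, your equality $\Pr[\lossNewRV_{(q)}>\lossCalRV_{(k)}]=a(k)$ need not hold (it becomes only an inequality), and the argument is incomplete for the general exchangeable setting the theorem targets.
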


\begin{proof}
Define the set of calibration losses $\lossSetCal= \{ \lossCalRV_1,\dots, \lossCalRV_{\nDataCal}\}$ and out-of-sample losses $\lossSetNew = \{\lossNewRV_{1} ,\dots{} ,\lossNewRV_{\nDataNew}\}$.
We will first prove the result in the case of continuous random variables, and then for discrete random variables.

Consider the case when $\lossSetCal \cup \lossSetNew$ has a continuous joint distribution, so there are no ties.
We need to show that 
\begin{equation} \label{eq:want to show  thm:tolerance}
    \Pr\left[\lossNewRV_{(\ceil{\nDataNew \covFraction})} >  \lossCalRV_{(k^\star)} \right] \leq \confLevel .
\end{equation}
Because of the decomposition into sum
\begin{equation}
    \begin{split}
        \Pr\left[\lossNewRV_{(\ceil{\nDataNew \covFraction})} >  \lossCalRV_{(k^\star)} \right] = \Pr\left[\lossCalRV_{(k^\star)} \leq \lossNewRV_{(\ceil{\nDataNew \covFraction})} \right] = 
         \sum_{j=k^\star}^\nDataCal \Pr\left[  \lossCalRV_{(j)} \leq \lossNewRV_{(\ceil{\nDataNew \covFraction})} < \lossCalRV_{(j+1)} \right] ,
    \end{split}
    \label{eq:decompose into sum}
\end{equation}
a closed form expression for $\Pr\left[  \lossCalRV_{(j)} \leq \lossNewRV_{(i)} < \lossCalRV_{(j+1)} \right]$ may lead us to proving \eqref{eq:want to show  thm:tolerance}.

The set of random variables $\lossSetCal \cup \lossSetNew = \{ \lossCalRV_1,\dots, \lossCalRV_{\nDataCal}, \lossNewRV_{1} ,\dots{} ,\lossNewRV_{\nDataNew}\}$ can be sorted. Denote these sorted variables $\sortedLoss{1} < \sortedLoss{2} < \dots{} < \sortedLoss{\nDataCal + \nDataNew}$. Every such value $\sortedLoss{i}$ come from an origin set, which is either $\lossSetCal$ or $\lossSetNew$. This assignment to the origin set partitions the set of $\sortedLoss{i}$-values into two subsets.
There are $\binom{\nDataCal+\nDataNew}{\nDataNew}$ such partitions, and they are equally probable, due to the assumed exchangeability.

We next investigate how many of the partitions fulfil \(\lossCalRV_{(j)} \leq \lossNewRV_{(i)} < \lossCalRV_{(j+1)}\), for $1\leq i \leq \nDataNew$, $1\leq j \leq \nDataCal$.
In this situation, $\lossNewRV_{(i)} = \sortedLoss{i+j}$. Of the $i+j-1$ losses less than $\sortedLoss{i+j}$, $j$ has $\lossSetCal$ as the origin set. There are thus $\binom{j+i-1}{j}$ equally probable partitions of the lesser loss values. Similarly, the loss values greater than $\lossNewRV_{(i)}$ can be partitioned in $\binom{\nDataCal-j+\nDataNew-i}{\nDataCal-j}$ ways. This shows that 
\begin{equation}
    \label{eq:ordinal pmf}
    \Pr\left[  \lossCalRV_{(j)} \leq \lossNewRV_{(i)} < \lossCalRV_{(j+1)} \right] = \frac{\binom{\nDataCal-j+\nDataNew-i}{\nDataCal-j}\binom{j+i-1}{j}}{\binom{\nDataCal+\nDataNew}{\nDataNew}}.
\end{equation}
By combining with \eqref{eq:decompose into sum}, we have found
\[     
    \Pr\left[\lossNewRV_{(\ceil{\nDataNew \covFraction})} >  \lossCalRV_{(k)} \right] = \sum_{j=k}^{\nDataCal} \frac{\binom{\nDataCal-j+\nDataNew-\ceil{\nDataNew \covFraction}}{\nDataCal-j}\binom{j+\ceil{\nDataNew \covFraction}-1}{j}}{\binom{\nDataCal+\nDataNew}{\nDataNew}} 
    \eqqcolon a'(k)
\]
Any $k$ such that $a'(k) \leq \confLevel$ would provide us with a valid \LAL{}. However, since the range of $a'(k)$ is $[\frac{1}{\binom{\nDataCal+\nDataNew}{\nDataNew}},1]$, the case when $\confLevel < \frac{1}{\binom{\nDataCal+\nDataNew}{\nDataNew}}$ means no such $k$ exists. Defining $\binom{b}{r}=0$ for all $b$ and $r<0$ allows us to finally define $a(k)$ to
\[
    a(k) \coloneqq  \sum_{j=k}^{\nDataCal+1} \frac{\binom{\nDataCal-j+\nDataNew-\ceil{\nDataNew \covFraction}}{\nDataCal-j}\binom{j+\ceil{\nDataNew \covFraction}-1}{j}}{\binom{\nDataCal+\nDataNew}{\nDataNew}}  
\]
increasing its domain of definition to $\{1,\dots,{\nDataCal+1}\}$ and its range to $[0,1]$.
By selecting $k^\star = \min \{ k \in \{1 , \dots, {\nDataCal+1}\} | a(k)\leq\confLevel\}$, we have proven the theorem for continuous variables.

Next, we prove the result for discrete random variables. Let $F$ be the joint cdf for $\lossSetCal\cup\lossSetNew $,
where each loss takes values in a finite or countable set $V= \{v_i\}$ of real numbers. Since we may get ties with non-zero probability, the preceding analysis fails. To circumvent this problem, construct a random vector $(\lambda_1,\dots{},\lambda_{\nDataCal+\nDataNew})$ with cdf $F^*$ such that
\begin{align*}
    F^*(l_1,\dots,l_{\nDataCal+\nDataNew}) &\geq F(l_1,\dots,l_{\nDataCal+\nDataNew}) \text{ always} \\
F^*(l_1,\dots,l_{\nDataCal+\nDataNew}) &= F(l_1,\dots,l_{\nDataCal+\nDataNew}) \text{ if } (l_1,\dots,l_{\nDataCal+\nDataNew}) \in V^{\nDataCal+\nDataNew}
\end{align*}
Define further a set of random variables $\overline{\lambda}_i = \min \left\{v \in V \middle| v\geq \lambda_i\right\}$ for all $i$.
Now $(\overline{\lambda}_1,\dots{},\overline{\lambda}_{\nDataCal+\nDataNew}) $ is equal to $(\lossCalRV_{1}\dots{}\lossCalRV_{\nDataCal},\lossNewRV_{1},\dots{},\lossNewRV_{\nDataNew})$ in distribution.
Also, $(\overline{\lambda}_{(i)} >  \overline{\lambda}_{(j)}) \Rightarrow (\lambda_{(i)} >  \lambda_{(j)})$ for all $i,j$.
Together, this means $\Pr\left[\lossNewRV_{(\ceil{\covFraction\nDataNew})} >  \lossCalRV_{(k^{\star})} \right]=\Pr\left[\overline{\lambda}_{(\ceil{\covFraction\nDataNew}+\nDataCal)} >  \overline{\lambda}_{(k^{\star})} \right] \leq \Pr\left[\lambda_{(\ceil{\covFraction\nDataNew}+\nDataCal)} >  \lambda_{(k^{\star})} \right] = a(k^\star)$.
where we have used the result on continuous random variables on $F^*$ to compute $k^\star$.
\end{proof}

\begin{remark}
The definition of the \LAL{}, \eqref{eq:lal criterion}, only demands that the limit level is at least $1-\confLevel$. However, the more conservative   $\lossLevel$ is, the larger is the excess coverage. From the proof, we see that if the joint set of losses $(\lossCalRV_{1}\dots{}\lossCalRV_{\nDataCal},\lossNewRV_{1},\dots{},\lossNewRV_{\nDataNew})$ has no ties, one may compute the \emph{exact} coverage $1-a(k^{\star})$. Thus when the method is conservative, it is transparently so.
\end{remark}

\begin{remark}
The proof technique above is inspired by \citet{fligner_applications_1976} which proves the result for the special case of iid data. It is noteworthy that when generalizing from iid to exchangeable data, we keep the same level of precision.
\end{remark}

\subsection{LAL for a single out-of-sample data point}

When $\nDataNew=1$,  the \LAL{} takes a very simple closed form. By deriving it from basic principles rather than using Thm.~\ref{thm:tolerance}, we also obtain a tightness guarantee.

\begin{theorem}
\label{thm:m1}
For a single out-of-sample data point ($\nDataNew$=1), a \LAL{} can be constructed as 
$\lossLevel[\confLevel{}][1] = \lossCalRV_{(k^{\star})}$, where $k^{\star}=\ceil{(\nDataCal+1)(1-\confLevel)}$.
For continuous data distributions, the almost sure out-of-sample loss guarantee is 
\begin{equation}
    \confLevel - \frac{1}{1+\nDataCal} \leq \Pr\left[  \lossNewRV_{1} > \lossLevel[\confLevel{}][1] \right] \leq \confLevel
    \label{eq:m1 continuous}
\end{equation}
For discrete data distributions, only the upper bound in \eqref{eq:m1 continuous} can be guaranteed.
\end{theorem}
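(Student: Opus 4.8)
The plan is to exploit the fact that for a single out-of-sample point the covering fraction is immaterial: since $\covFraction \in (0,1)$ we have $\ceil{\nDataNew\covFraction} = \ceil{\covFraction} = 1$, so the criterion \eqref{eq:lal as ordinal} collapses to controlling $\Pr[\lossNewRV_1 > \lossCalRV_{(k^\star)}]$ for the lone loss $\lossNewRV_1$. Everything then hinges on the rank of $\lossNewRV_1$ within the pooled sample, which exchangeability pins down exactly, so no multinomial bookkeeping as in Thm.~\ref{thm:tolerance} is needed.

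First, in the continuous (no-ties) case, I would introduce the rank $R$ of $\lossNewRV_1$ among the $\nDataCal+1$ pooled losses $\lossSetCal \cup \lossSetNew$, so that $\lossNewRV_1 = \sortedLoss{R}$. By exchangeability all $\nDataCal+1$ positions are equally likely, hence $R$ is uniform on $\{1,\dots,\nDataCal+1\}$. The key translation step is to rewrite the tail event in terms of $R$: once the extra point is removed, the calibration order statistics satisfy $\lossCalRV_{(k)} = \sortedLoss{k}$ for $k<R$ and $\lossCalRV_{(k)} = \sortedLoss{k+1}$ for $k\geq R$, and a short case split shows that $\lossNewRV_1 > \lossCalRV_{(k^\star)}$ holds if and only if $R > k^\star$. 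I expect this index bookkeeping -- relating the calibration order statistics to the pooled order statistics after inserting $\lossNewRV_1$ -- to be the only genuinely delicate part; the remainder is arithmetic.

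Given the equivalence, uniformity yields the exact identity $\Pr[\lossNewRV_1 > \lossCalRV_{(k^\star)}] = \Pr[R > k^\star] = 1 - k^\star/(\nDataCal+1)$. Substituting $k^\star = \ceil{(\nDataCal+1)(1-\confLevel)}$ and using the defining ceiling inequalities $(\nDataCal+1)(1-\confLevel) \leq k^\star < (\nDataCal+1)(1-\confLevel)+1$ gives $1 - k^\star/(\nDataCal+1) \leq \confLevel$ for the upper bound and $1 - k^\star/(\nDataCal+1) > \confLevel - 1/(\nDataCal+1)$ for the lower bound, which is precisely \eqref{eq:m1 continuous}. I would also verify the boundary $k^\star = \nDataCal+1$ (which arises exactly when $\confLevel < 1/(\nDataCal+1)$): there $\lossCalRV_{(\nDataCal+1)}$ is the supremum of the support, so the tail probability is $0$, and since $\confLevel - 1/(\nDataCal+1) < 0$ in this regime the stated bounds remain valid.

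Finally, for discrete distributions I would reuse the coupling from the proof of Thm.~\ref{thm:tolerance}: replace the losses by the continuous dominating variables $\lambda_i$ whose ceiling-discretisation $\overline{\lambda}_i$ matches the true joint law, and invoke the implication $\overline{\lambda}_{(i)} > \overline{\lambda}_{(j)} \Rightarrow \lambda_{(i)} > \lambda_{(j)}$. Specialised to $\nDataNew=1$ this reads $\Pr[\lossNewRV_1 > \lossCalRV_{(k^\star)}] = \Pr[\overline{\lambda}_{(\nDataCal+1)} > \overline{\lambda}_{(k^\star)}] \leq \Pr[\lambda_{(\nDataCal+1)} > \lambda_{(k^\star)}] \leq \confLevel$, so the one-directional domination transfers only the upper bound; the lower bound relies on the exact uniformity of $R$ and hence genuinely fails once ties have positive probability, which is why it is not asserted in the discrete case.
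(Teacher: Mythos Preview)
Your proof is correct and, for the continuous case, essentially identical to the paper's: both derive the exact identity $\Pr[\lossNewRV_1 > \lossCalRV_{(k^\star)}] = 1 - k^\star/(\nDataCal+1)$ from the uniformity of the rank of $\lossNewRV_1$ in the pooled $(\nDataCal+1)$-sample under exchangeability, then sandwich it using the ceiling inequalities for $k^\star = \ceil{(\nDataCal+1)(1-\confLevel)}$. The only genuine difference is in the discrete case: you reuse the continuous-dominating coupling $(\lambda_i,\overline{\lambda}_i)$ from the proof of Thm.~\ref{thm:tolerance}, whereas the paper handles ties directly by breaking them uniformly at random so that the rank $R$ is still exactly uniform on $\{1,\dots,\nDataCal+1\}$ and then observes $\{\lossNewRV_1 \leq \lossCalRV_{(k^\star)}\} \supseteq \{R \leq k^\star\}$. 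Both devices are inherently one-sided and thus transfer only the upper bound, so the distinction is cosmetic; your route economises by recycling machinery already established, while the paper's tie-breaking argument is self-contained and avoids invoking the heavier coupling construction.
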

\begin{proof}
When $(\lossCalRV_{1},\dots{},\lossCalRV_{\nDataCal},\lossNewRV_{1})$ are continuous, 
the values are almost surely unique, and therefore there are $\binom{\nDataCal+1}{1}=\frac{1}{\nDataCal+1}$ equally likely ways to select which one is $\lossNewRV_{1}$. Only one such selection obeys $\lossCalRV_{(j)} \leq \lossNewRV_{1} \leq \lossCalRV_{(j+1)}$. Therefore,
\[
    \Pr[ \lossCalRV{(j)} \leq \lossNewRV_{1} \leq \lossCalRV_{(j+1)}] =  \frac{1}{1+\nDataCal}
\text{ and }
    \Pr[ \lossNewRV_{1} \leq \lossCalRV_{(k^{\star})}] = \frac{k^{\star}}{1+\nDataCal}
\]
Because $ (1-\confLevel) \leq \ceil{(\nDataCal+1)(1-\confLevel)}/(\nDataCal+1) \leq (1-\confLevel) + 1/(\nDataCal+1)$ we compute
\[
    \confLevel-\frac{1}{1+\nDataCal} \leq \Pr[\lossNewRV_{1} > \lossCalRV_{(k^{\star})}] \leq \confLevel
\]

When $(\lossCalRV_{1},\dots{},\lossCalRV_{\nDataCal},\lossNewRV_{1})$ are discrete and ties are possible, we can still prove the upper bound. Consider rank of $R$ of $\lossNewRV_{1}$, i.e., put $\{\lossCalRV_{1},\dots,\lossCalRV_{\nDataCal},\lossNewRV_{1}\}$ in nondecreasing order, and let $R$ denote the position of $\lossNewRV_{1}$.
When there are ties, position the tied values in a uniformly random way. By construction $R$ is uniformly distributed over $1,\dots{},(\nDataCal+1)$ and
\[\Pr[ \lossNewRV_{1} \leq \lossRV_{(k^{\star})}] \geq \Pr[R \geq k^{\star}] = \frac{k^{\star}}{1+\nDataCal} \geq \confLevel \]
\end{proof}

\begin{remark}
\label{remark:empirical quantile} 
For iid data and $\nDataNew=1$, the \LAL-curve approaches the complementary cdf of $\lossRV_1$, i.e., $1-F$. This facilitates the interpretation of the \LAL{}-curve as a quantile point estimate. To see this, let $\widehat{F}^{-1}_\nDataCal$ denote the empirical quantile function of $\lossRV_1$ based on the losses $\lossCalRV_{1}, \dots{}, \lossCalRV_{\nDataCal}$.
Then the \LAL{} of Thm.~\ref{thm:m1} can equivalently be defined as 
\begin{equation}
    \lossLevel = \begin{cases}\widehat{F}^{-1}_\nDataCal\left(\frac{\nDataCal+1}{\nDataCal}(1-\confLevel)\right) &\text{ if } \frac{\nDataCal+1}{\nDataCal}(1-\confLevel) \in (0,1) \\
    \lossCalRV_{(\nDataCal+1)} & \text{ else}
    \end{cases}
    \label{eq:level-alpha-loss is quantile}
\end{equation}
If $\lossRV$ has a bounded and connected range, $\widehat{F}^{-1}_\nDataCal$ converges uniformly to $F^{-1}$ \citep{bogoya_convergence_2016}, and $\lossLevel{} \to F^{-1}(1-\confLevel)$.
Plotting $\confLevel$ on the vertical axis against  $F^{-1}(1-\confLevel)$ is identical to plotting the complementary cdf $1-F(\loss)$ on the vertical axis against $\loss$, so in this case, the \LAL{}-curve converges to the graph of the complementary cdf.
\end{remark}


\subsection{LAL for large out-of-sample data sets}
If the number of out-of-sample data points is very large, we may use a limit argument and let $\nDataNew \to \infty$ in on Thm.~\ref{thm:tolerance}.

\begin{corollary}
\label{cor:ex to iid}
Let $\operatorname{BIN}^{-1}(\cdot;\nDataCal,\covFraction)$ denote the quantile function of a binomial distribution with parameters $(\nDataCal, \covFraction)$. 
For an infinite sequence of exchangeable losses $(\lossCalRV_{1}\dots{}\lossCalRV_{\nDataCal},\lossNewRV_{1},\dots{})$,
let $\lossLevel = \lossCalRV_{(k^\star)}$ with
\[k^{\star} = 1+\operatorname{BIN}^{-1}(1-\confLevel ;\nDataCal,\covFraction).\]
This \LAL{} satisfies a limit form of \eqref{eq:lal as ordinal}:
\begin{equation}\label{eq:lal limit form}
\lim_{m\to \infty} \Pr\left[\lossNewRV_{(\ceil{\nDataNew \covFraction})} >  \lossLevel \right] \leq \alpha 
\end{equation}
\end{corollary}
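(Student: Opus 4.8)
The plan is to obtain the result by taking the limit $\nDataNew\to\infty$ in the exact probability formula already established in Thm.~\ref{thm:tolerance}. For each fixed $\nDataNew$, the first $\nDataCal+\nDataNew$ entries of the infinite exchangeable sequence are themselves exchangeable, so Thm.~\ref{thm:tolerance} applies and, in the continuous case, the probability appearing in \eqref{eq:lal as ordinal} evaluated at a \emph{fixed} index $k$ equals the finite sum
\[
a'_{\nDataNew}(k) = \sum_{j=k}^{\nDataCal} \frac{\binom{\nDataCal-j+\nDataNew-\ceil{\nDataNew\covFraction}}{\nDataCal-j}\binom{j+\ceil{\nDataNew\covFraction}-1}{j}}{\binom{\nDataCal+\nDataNew}{\nDataNew}},
\]
where I write $a'_{\nDataNew}$ to stress that the dependence on $\nDataNew$ enters only through $\nDataNew$ itself and $i=\ceil{\nDataNew\covFraction}$, whereas the candidate $k^\star$ of the corollary is a constant not depending on $\nDataNew$. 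It therefore suffices to compute $\lim_{\nDataNew\to\infty}a'_{\nDataNew}(k)$ and to pick the smallest $k$ for which this limit is at most $\confLevel$.

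The key observation is that, since $j\le\nDataCal$ and $\nDataCal$ are fixed, each binomial coefficient in a summand is a product of a fixed number of linear factors, hence a polynomial of fixed degree in $\nDataNew$. Thus $\binom{j+i-1}{j}\sim i^{j}/j!$, $\binom{\nDataCal-j+\nDataNew-i}{\nDataCal-j}\sim(\nDataNew-i)^{\nDataCal-j}/(\nDataCal-j)!$, and $\binom{\nDataCal+\nDataNew}{\nDataNew}\sim\nDataNew^{\nDataCal}/\nDataCal!$. Substituting $i=\ceil{\nDataNew\covFraction}$ and using $i/\nDataNew\to\covFraction$ and $(\nDataNew-i)/\nDataNew\to 1-\covFraction$, the $j$-th summand converges to $\binom{\nDataCal}{j}\covFraction^{j}(1-\covFraction)^{\nDataCal-j}$, the Binomial$(\nDataCal,\covFraction)$ mass at $j$. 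As the sum is finite, the limit passes inside and
\[
\lim_{\nDataNew\to\infty}a'_{\nDataNew}(k)=\sum_{j=k}^{\nDataCal}\binom{\nDataCal}{j}\covFraction^{j}(1-\covFraction)^{\nDataCal-j}=\Pr[B\ge k],\qquad B\sim\operatorname{BIN}(\nDataCal,\covFraction).
\]

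It then remains to identify the smallest valid $k$. Under the convention $\operatorname{BIN}^{-1}(p;\nDataCal,\covFraction)=\min\{b:\Pr[B\le b]\ge p\}$, the requirement $\Pr[B\ge k]\le\confLevel$ is equivalent to $\Pr[B\le k-1]\ge 1-\confLevel$, i.e.\ to $k-1\ge\operatorname{BIN}^{-1}(1-\confLevel;\nDataCal,\covFraction)$; the smallest such integer is exactly $k^\star=1+\operatorname{BIN}^{-1}(1-\confLevel;\nDataCal,\covFraction)$, matching the statement. To cover discrete (possibly tied) distributions I would invoke the upper-bound half of Thm.~\ref{thm:tolerance}: for every $\nDataNew$ the true probability is bounded above by $a'_{\nDataNew}(k^\star)$, so $\limsup_{\nDataNew\to\infty}\Pr[\lossNewRV_{(\ceil{\nDataNew\covFraction})}>\lossCalRV_{(k^\star)}]\le\Pr[B\ge k^\star]\le\confLevel$, which is the claimed limit form \eqref{eq:lal limit form}.

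The main obstacle is the asymptotic bookkeeping rather than any deep idea. Two points need care: one must check $i\to\infty$ and $\nDataNew-i\to\infty$ so that the fixed-length products in numerator and denominator genuinely reduce to their leading monomials (these use $\covFraction\in(0,1)$), and one must match the quantile convention so that the translation between $\Pr[B\ge k]\le\confLevel$ and the definition of $k^\star$ is exact, including boundary cases where the binomial cdf jumps past $1-\confLevel$. Since the limit $\Pr[B\ge k^\star]$ may be strictly below $\confLevel$ (the binomial being discrete), the guarantee is in general conservative, consistent with the excess-coverage remark following Thm.~\ref{thm:tolerance}.
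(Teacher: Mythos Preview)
Your proposal is correct and reaches the same limiting binomial identity as the paper, but by a more elementary route. The paper's proof (App.~\ref{sec:details for cor:ex to iid}) expands every factorial via Stirling's two-sided bounds and then tracks six separate $(\cdot)^{(\cdot)}$ factors through a fairly heavy little-oh calculation, sandwiching $H(\nDataNew)$ between $h(\nDataNew,0)$ and $h(\nDataNew,1)$. You instead exploit that $j\le\nDataCal$ with $\nDataCal$ fixed, so each binomial coefficient is a product of a \emph{bounded} number of linear factors; the asymptotics $\binom{j+i-1}{j}\sim i^{j}/j!$, $\binom{\nDataCal-j+\nDataNew-i}{\nDataCal-j}\sim(\nDataNew-i)^{\nDataCal-j}/(\nDataCal-j)!$ and $\binom{\nDataCal+\nDataNew}{\nDataCal}\sim\nDataNew^{\nDataCal}/\nDataCal!$ then follow from elementary polynomial growth, and the ratio collapses to $\binom{\nDataCal}{j}(i/\nDataNew)^{j}((\nDataNew-i)/\nDataNew)^{\nDataCal-j}$ without any Stirling machinery. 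Both arguments need $\covFraction\in(0,1)$ so that $i$ and $\nDataNew-i$ both diverge, both identify $k^\star$ via the binomial cdf in the same final step, and your appeal to the upper-bound half of Thm.~\ref{thm:tolerance} to cover ties is precisely the mechanism the paper already uses for the discrete case there. The gain of your approach is brevity and transparency; the paper's Stirling computation is more laborious but makes the error sandwich fully explicit.
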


\begin{proof}
The result follows by expanding the binomial coefficients with Stirling's formula and taking the limit for $ m \to \infty$. See App.~\ref{sec:details for cor:ex to iid} for details.
\end{proof}

Starting from Cor.~\ref{cor:ex to iid}, we can find a different interpretation of the \LAL{} when $\nDataNew\to\infty$.
\begin{remark} \label{rem:minf conf int}
Consider the case of iid data, where $\lossCalRV_{i}$ and $\lossNewRV_{j}$ have a common cdf $F$ for all $i,j$. Let  $F^{-1}$ be the quantile function, and $\widehat F^{-1}_\nDataNew$ be the empirical quantile function based on $\{\lossRV_{1},\dots{},\lossRV_{\nDataNew}\}$. 

As $\nDataNew \to \infty$, we have that  $\lossNewRV_{(\ceil{m\covFraction})}=\widehat F^{-1}_\nDataNew(\covFraction) \to F^{-1}(\covFraction)$. Therefore, \eqref{eq:lal limit form} simplifies to $\Pr\left[F^{-1}(\covFraction) >  \lossLevel \right] \leq \alpha $, which gives another interpretation of the \LAL{} as the boundary of a confidence interval $C_\confLevel^\covFraction(\lossSetCal)\coloneqq (-\infty,\lossLevel]$. This interval satisfies $\Pr\left[ F^{-1}(\covFraction) \not\in C_\confLevel^\covFraction(\lossSetCal) \right] \leq\confLevel$ and is thus a valid confidence interval for the $\covFraction$-quantile of $\lossNewRV_i$ for all~$i$.

This intuition is also useful in the non-iid case as $m\to\infty$.
The De~Finetti theorem \citep{kingman_uses_1978} states that if $(\lossCalRV_{1}\dots{}\lossCalRV_{\nDataCal},\lossNewRV_{1},\lossNewRV_{2}\dots{})$ forms an infinite sequence of exchangeable random variables, there is an auxiliary random variable $\zeta$ such that all the conditional variables $(\lossCalRV_{i} | \zeta)$ and $(\lossRV_{j} | \zeta)$ are iid with cdf $F_\zeta$. Therefore, the interpretation of the \LAL{} as the boundary of a confidence interval is useful in the exchangeable-but-not-iid data setting, even if its interpretation must be handled with more caution.
\end{remark}

\section{Experiments}

This section presents examples of how the \LAL{} can be applied to analyzing the out-of-sample loss of a model or family of models. 
Code to reproduce all experiments can be found at 
\githubLink{}.

The experiments are intended to be illustrative rather than exhaustive. They are limited in two respects. Firstly, to reduce runtime and simplify reproducibility of the experiments, we consider models $\model$ that have relatively few parameters and are trained using small data sets. Since the theoretical guarantees hold for \emph{any} fitted model, they also hold for large models and models trained on large data sets. Secondly, the experiments use small to moderate calibration set sizes $\nDataCal$. As suggested in Remark~\ref{remark:empirical quantile}, increasing $\nDataCal$ improves the inference of loss distribution, but there is a practical limitation to be considered: When $\nDataCal$ is large, it is advisable to transfer some data to the training set to improve the model performance, rather than improving our inferences about the model performance.

\subsection{Study of asymptotics}
\label{sec:tolerance}
This experiment illustrates the limit for different $\nDataNew$ corresponding to  Thm.~\ref{thm:tolerance}, Cor.~\ref{cor:ex to iid} and Thm.~\ref{thm:m1}.
The data set consist of California housing prices from the 1990 census
 \citep{kelley_1997_sparse}, covering 20\,640 housing blocks.

Each data point $\dataPoint=(\features,\labely)$ represents a single city block.  The label $\labely \in \R$ is the median house value in the block, and the feature vector $\features \in \R^8$ consists of continuous variables such as block coordinates, median house ages and tenant median income.
The training data set $\dataSetForModel$ has $\nDataModel=15~000$ sampled without replacement. The calibration data set $\dataSetCal$ has $\nDataCal=150$ and is sampled without replacement from the remaining data.

The model is a regression for the logarithm of the median house value, operating on standardized features and labels, and uses a random Fourier feature basis \citep{rahmi_2008_weighted}.  
Let $K$ random Fourier functions with bandwidth $b$ be stacked in a vector $\phi$.
The model is $\model(\features) = \exp [ \phi(\features)^{\T}\hat{\theta} ]$, where $\hat \theta$ is found by L2-regularized least squares regression.
The hyperparameters (number of basis functions $K$, bandwidth $b$ and regularization strength $\lambda$) are tuned by five-fold cross-validation on the training data.
 The loss function is the absolute error expressed in dollars
\[
    \loss(\features, \labely) = \abs{\labely - \model(\features)}.
\]

We now turn to limiting $\nDataNew$ out-of-sample losses, where  $\nDataNew\in\{1,30,\infty\}$. \LAL{}-curves were drawn with varying $\covFraction$-fractions, shown in Fig.~\ref{fig:showcase} and Fig.~\ref{fig:asymptotics}.  The empirical risk, defined as the average loss on the calibration data, is also calculated, and the calibration losses are presented as a histogram.

\begin{figure*}
\centering
\includegraphics[width=6in,height=2in]{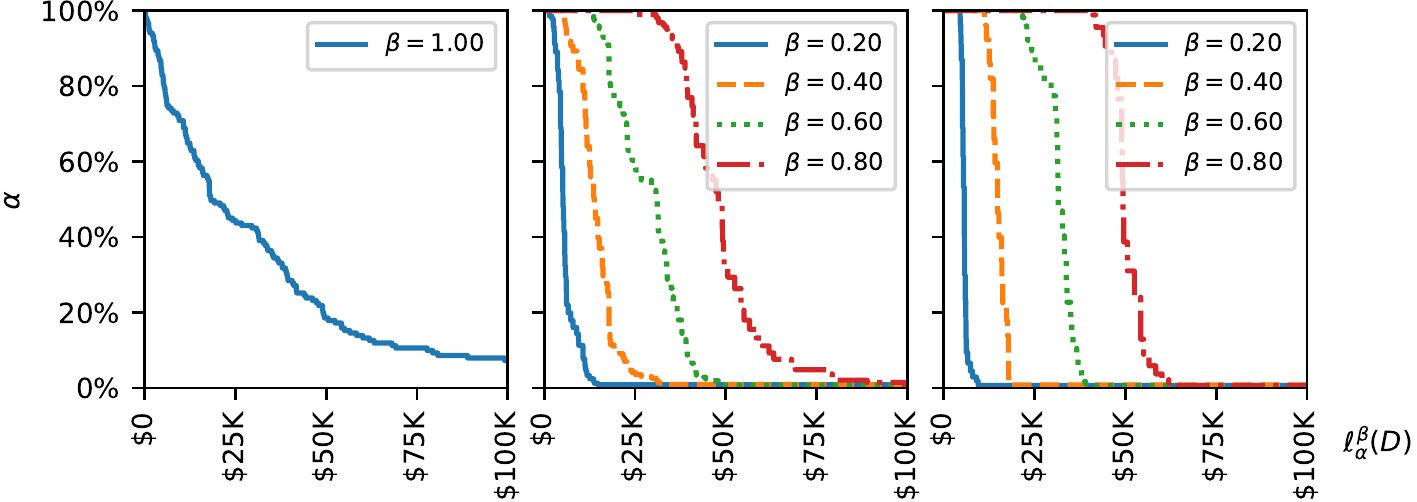}
\caption[]{\LAL{}-curves. From left to right, $\nDataNew=1$ computed with Thm.~\ref{thm:m1}, $\nDataNew=30$ computed with Thm.~\ref{thm:tolerance} and $\nDataNew=\infty$ computed with Cor.~\ref{cor:ex to iid}. 
The curves assure that a fraction $\covFraction$ of out-of-sample losses will not exceed $\lossLevel$, with at least probability $1-\confLevel$.
For example, we can see that among the $\nDataNew=30$ next samples, at least 80\% of them will have prediction losses less than $\$70,000$, with a confidence of $90\%$.
}
\label{fig:asymptotics}
\end{figure*}

Averaging the losses over the calibration data gives an unbiased point estimate of the expected out-of-sample loss for $\model$ but it, by itself, lacks statistical guarantees. While it estimates the mean out-of-sample prediction error to be around \$35~000, the \LAL{}-curve for a single new prediction error ($\nDataNew=1$) in  Fig.~\ref{fig:asymptotics} shows that it may be close to \$80~000 (for $\confLevel\approx 10\%$).

If we now consider a batch of $\nDataNew=30$ predictions, the \LAL{}-curve for $\nDataNew=30$ in Fig.~\ref{fig:asymptotics} informs us that $\covFraction = 80\%$ of them will have errors smaller than \$70~000 (for $\confLevel\approx 10\%$). 
The number of data points blocks not analyzed are 5~490, so we may also consider the limiting case $\nDataNew\to\infty$. The \LAL{} curve now tells us that $\covFraction=80\%$ of prediction errors will be smaller than \$60~000 (for $\confLevel\approx 10\%$). 

By comparing the \LAL{}-curves in Fig.~\ref{fig:asymptotics}, we learn how the out-of-sample batch size $\nDataNew$ affects the tail behavior of the \LAL{} at a fixed $\covFraction$. 
The \LAL{}-curve is slowly decaying for $\nDataNew=1$. For $\nDataNew=30$, the decay is faster. For $\nDataNew=\infty$ the decay is even more abrupt.
Some intuition can be gained for iid data. As $m$ increases, the variance of $\lossNewRV_{(\ceil{\nDataNew \covFraction})}$ decreases (it is asymptotically normal and $\sqrt{\nDataNew}$-consistent, see e.g. \citep[Cor.~21.5]{vaart_asymptotic_1998}), and the bound in \eqref{eq:lal as ordinal} can be made tighter.

\subsection{Distribution shift analysis}
\label{sec:example:distshift}
This experiment illustrates the analysis of distribution shifts \citep{quinonero-candela_dataset_2009}, also known as concept drift, using the \LAL{}-curve. The experiment also verifies Thm.~\ref{thm:m1} numerically. The availability of a calibration data $\dataSetCal$ invites the notion of refitting or fine-tuning, to adapt to potential distribution shift \citep{lee2023surgical}. This is not always possible, e.g. when calibration data is limited (too small $\nDataCal$).  It may still be desirable to evaluate the model performance on out-of-distribution data. Since the \LAL{} is valid for any $\nDataCal$, it will always be an available diagnostic tool to analyze the extent to which distribution shift affects the model performance.

\citet{lu_learning_2018} proposes a taxonomy for distribution shift detection methods. In this taxonomy, the \LAL{} would qualify as an \emph{error rate-based} method, as it is based on a loss function. Example methods in this group are the Drift Detection Method (DDM) \citep{gama2004learning} and Black Box Shift Estimation (BBSE) \citep{lipton2018detecting} which detects distribution shift by monitoring when classifier performance changes with statistical significance. In contrast, the \LAL{} does focus on distributions of out-of-sample losses, enabling analysis of \emph{how/shift severity} for distribution shifts (again using the \citet{lu_concept_2014} taxonomy). Other methods for \emph{how}, focus on statistical distances between feature distributions \citep{rabanser_failing_2019}. While such methods do not need labels, they do not measure shifts in the way that matters for the model, i.e., the loss.

Let $\dataPointRV_i = (\featuresRV_i,\labelyRV_i)$, with real valued $\featuresRV_i$ and $\labelyRV_i$, and generate data according to
\begin{align}
 \featuresRV_i &\sim \normDist{\mu}{\sigma^2} \\
    \labelyRV_i | \featuresRV_i=\features &\sim \normDist{\features(\features-1)(\features+1)}{1}
\end{align}
The training data set $\dataSetForModel$ was created with $\nDataModel=100$, $\mu=1$, $\sigma=0.5$. A quadratic regression model was used, since it approximates the conditional expectation function well over the training data; $\model(\features)=\widehat{\theta}_0 + \widehat{\theta}_1 \features + \widehat{\theta}_2 \features^2$ was fitted to $\dataSetForModel$ via least-squares.
See Fig.~\ref{fig:intro triplet:data}. 
We analyze the case of out-of-sample batch size $m=1$. The performance of the model is evaluated using the absolute error loss:
\begin{equation}
    \loss(\features,\labely) = \abs{\labely-\model(\features)}
    \label{eq:regression_example_lossabs}
\end{equation}
The out-of-sample losses are quantified for two different data distributions.
In the first case, $\mu=1$, $\sigma=0.5$, so there is no distribution shift, and we use a calibration data set $\dataSetCal_1$ for which $\nDataCal=30$. In the second case, $\mu=0.75$, $\sigma=0.75$, resulting in a shift in $\featuresRV$ and we use a calibration data set  $\dataSetCal_2$ also having $\nDataCal=30$.  The \LAL-curves in both cases are presented in Fig.~\ref{fig:intro triplet:curves}, which reveal significantly larger out-of-sample losses for the distribution $\evalDist_2$ from which $\dataSetCal_2$ was drawn. The 5\%-tail losses are nearly twice as large for the shifted distribution.

\begin{figure*}[!t]
\centering
\subfloat[Fitted model and calibration data]{
    \includegraphics[width=2.5in]{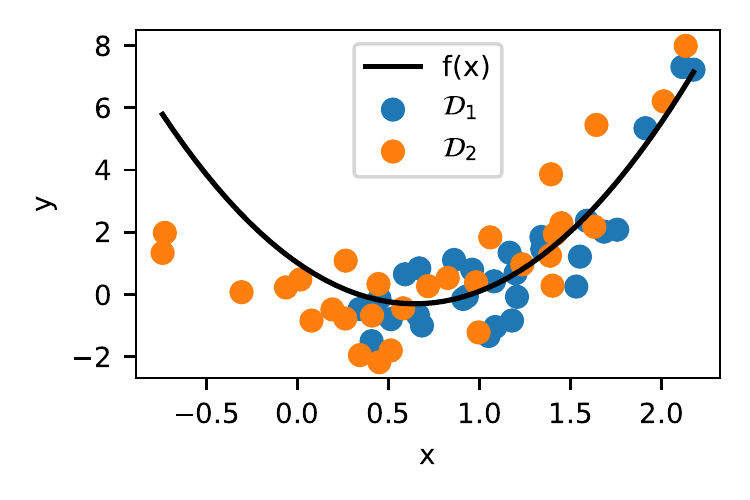}
    \label{fig:intro triplet:data}
}
\hfil
\subfloat[\LAL{}-curves, $\nDataNew=1$, $\covFraction=1$]{
    \includegraphics[width=2.5in]{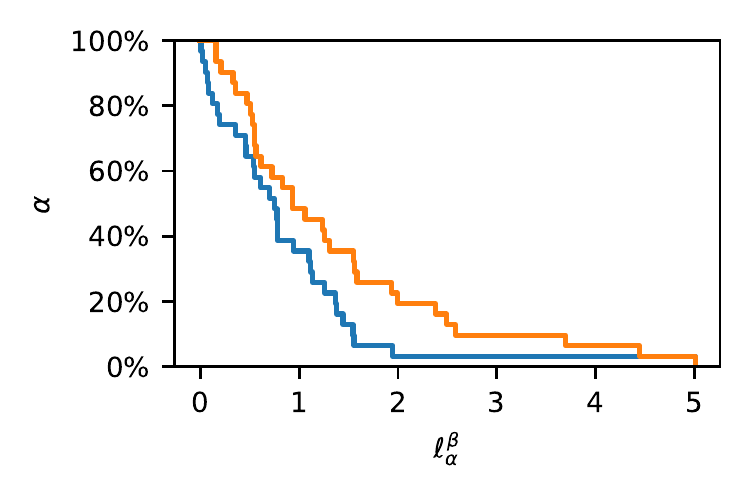}
    \label{fig:intro triplet:curves}
}
      
\caption{Quadratic regression model $\model(\features)$ evaluated using absolute prediction error loss function $\loss(\features,\labely)=|\labely - \model(\features)|$. 
(a) A fitted model and two different calibration data sets $\dataSetCal_1 \overset{iid}{\sim} \evalDist_1$ (identical to training data distribution) and $\dataSetCal_2 \overset{iid}{\sim} \evalDist_2$ (shifted distribution). 
(b) \LAL{}-curves under distributions $\evalDist_1$ and $\evalDist_2$. A single out-of-sample loss exceeds the limit $\lossLevel$ by a probability of at most $\confLevel$, as given by the curve. (See also Thm.~\ref{thm:m1}).
}
\label{fig:intro triplet}
\end{figure*}

Under the same experimental setup, we verified the tightness result of Thm.~\ref{thm:m1}. Using $2~000$ Monte Carlo runs, the empirical coverage was computed for different $\confLevel$, and plotted in Fig.~\ref{fig:intro triplet:coverage}. 
Moreover, Rem.~\ref{remark:empirical quantile} states the convergence of the \LAL{} to the quantile function $F^{-1}(1-\confLevel)$ as $\nDataCal$ increases. This is illustrated in Fig.~\ref{fig:ccfd:quantile conv}. 
The quantile function was numerically approximated using $10^5$  samples. The expected \LAL{} is computed using $2~000$ Monte Carlo runs with data sampled from $\evalDist_2$. We see that the \LAL{} approaches the  quantile function as $\nDataCal$ increases.

\begin{figure*}[!t]
\centering
\subfloat[Empirical coverage]{
    \includegraphics[width=3in]{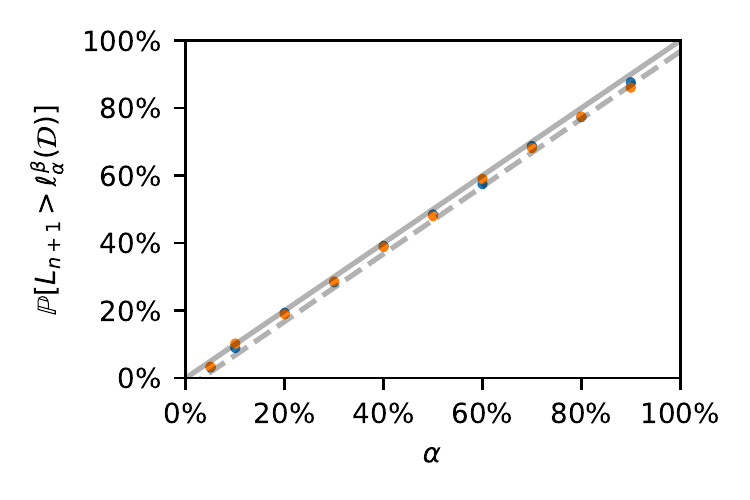}
    \label{fig:intro triplet:coverage}
}
\hfil
\subfloat[Convergence of the \LAL{} to quantile function]{
    \includegraphics[width=3in]{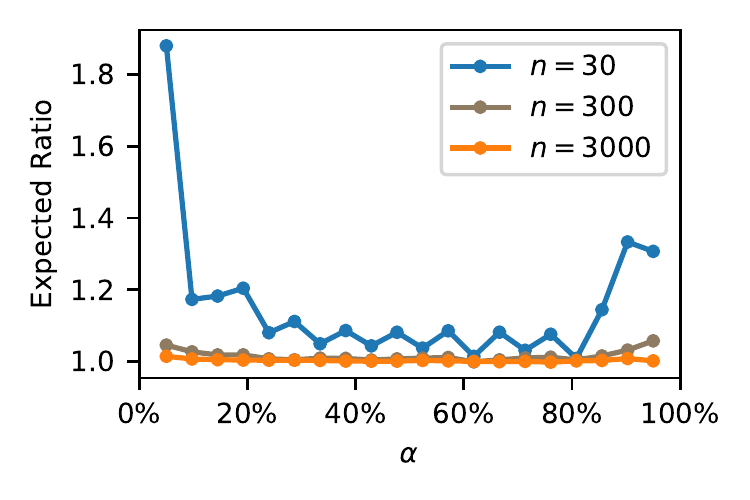}
    \label{fig:ccfd:quantile conv}
}
      
\caption{ 
    The finite-sample guarantee of Thm.~\ref{thm:m1} is verified in \protect\subref{fig:intro triplet:coverage}, which illustrates both the upper bound (solid line) and lower bound (dashed line).
    \protect\subref{fig:ccfd:quantile conv} Illustration of the connection between $\LAL{}$ and the quantile function (Rem.~\ref{remark:empirical quantile}). The expected value of the ratio $\lossLevel{}  / F^{-1}(1-\confLevel)$ is evaluated for different values of  $\nDataCal$ with data drawn from the shifted distribution $\evalDist_2$. As $\nDataCal$ increases, the expected ratio approaches 1, from above.
}
\end{figure*}

\subsection{Classification error analysis}
This experiment shows that the proposed methodology can be applied to classification models as well. We will also consider how adversarial distribution shifts manifest in the \LAL-curve.
We use the Palmer Penguin data set, popularized by \citet{horst_palmerpenguins_2020}. The 333 complete record data points are pairs $z_i=(x_i,y_i)$ of features $x_i\in\R^{7}$ and labels $y_i\in \{1,2,3\}$.
The labels $\labely_i$ are categorical, encoding the penguin species.
The features $\features_i$ are vectors of (Island, Bill Length, Bill Depth, Flipper Length, Body Mass, Sex, Year), a mixture of categorical and integer-valued features.

We use a training data set $\dataSetForModel$ of $\nDataModel=150$, leaving 183 samples for calibration. Using $\dataSetForModel$, we fit $\model(\features)$ via multinomial logistic regression using L2-regularization and cross-validation. The model output is a three-dimensional vector $\model(\features)=[\model_1(\features),\model_3(\features),\model_3(\features)]^\T$ approximating the conditional probabilities, so that $\model_i(\features)$ approximates $ \Pr[\labelyRV=i|\featuresRV=\features]$.
The model is to be evaluated on calibration data using the misclassification probability loss 
\begin{equation}
\loss(\features,\labely) = 1-\model_\labely(\features)
\end{equation}
The out-of-sample batch size was set to $\nDataNew=1$.

Two different calibration data sets $\dataSetCal_1$ and $\dataSetCal_2$ of sample size $\nDataCal=50$ were constructed. 
We sampled from the 183 held out data points without replacement. For $\dataSetCal_1$, the initial probability of selecting a sample was uniform over the data.
For $\dataSetCal_2$, the initial probability of selecting a sample was proportional to $\loss(\features,\labely)$, effectively an adversarial reweighting of samples.
The \LAL{}-curves of Thm.~\ref{thm:m1} (set $\nDataNew=1$ and $\covFraction$ arbitrary) are presented in Fig.~\ref{fig:penguin}, showing that $\LAL{}$-curves can be applied to classification models. The fact that one curve comes from an adversarial calibration sample is manifest by larger \LAL{} for every confidence $\confLevel$ compared to the non-adversarial calibration sample.

\begin{figure}
    \begin{minipage}{0.475\textwidth}
        \centering
        \includegraphics[width=\linewidth]{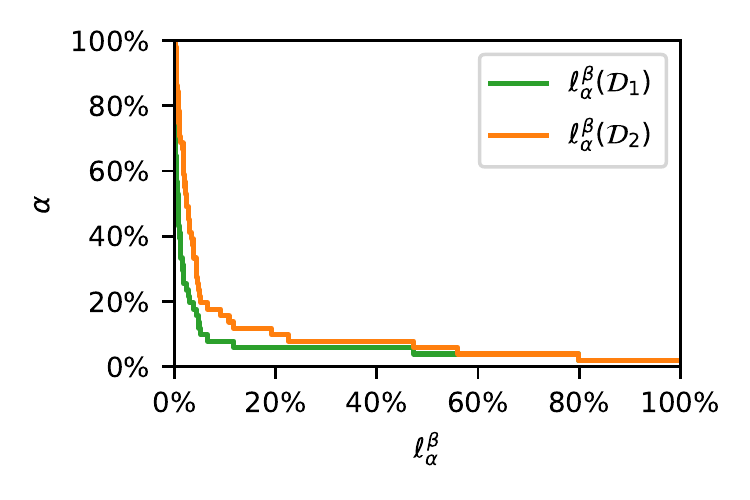}
        \caption{\LAL-curves ($\nDataNew=1$) for the classification error analysis.
        The loss function indicates the certainty of misclassification -- a loss of 80\% means that the model assigned 80\% probability to the wrong labels.
        Data~set $\dataSetCal_1$, is exchangeable with the training data.
        Data~set $\dataSetCal_2$ is adversarially sampled, presenting notably larger \LAL{}.}
        \label{fig:penguin}
    \end{minipage}\hfill
    \begin{minipage}{0.475\textwidth}
        \centering
        \includegraphics[width=\linewidth]{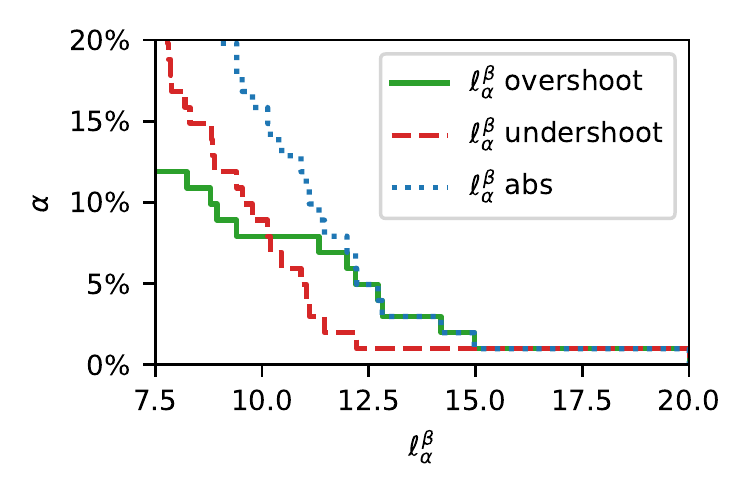}
        \caption{
        \LAL-curves ($\nDataNew=1$) for the regression error analysis.
        Using different loss functions, all in units of dB, gives deeper insight in the model fit. Looking at $\confLevel$ around $2-8\%$, using loss for under- and overshoot, we see that the model is more likely to overshoot out-of-sample data than it is to undershoot.
        }
        \label{fig:airfoil}
    \end{minipage}
\end{figure}

\subsection{Regression error analysis}
This experiment shows how alternative loss functions can be used to analyze the asymmetry of errors in regression problems.
We use the UCI Airfoil data~set \citep{dua_uci_2017}.
The task is to predict a label $\labely \in \R$ representing the sound measured in dB. 
The features $\features \in \R^{5}$
Each data point is a feature-label pair $\dataPoint_i = (\features_i,\labely_i)$.

The calibration data $\dataSetCal$ is constructed by weighted sampling of $\nDataCal=100$ samples. The probability to draw a data point $(\features_i,\labely_i)$ is proportional to $\exp(\begin{bmatrix} 1&0&0&0&-1\end{bmatrix}\features_i)$, making data points with high frequency and small displacement more likely to sample, similar to the distribution shift experiments in \citet[sec. 2.2]{tibshirani_conformal_2020}. The remaining $1~403$ data points constitute the training data set $\dataSetForModel$.

The model $\model(\features)$ uses a spline basis that is fit using least-squares with L2-regularization and cross-validation. To study the asymmetry of prediction errors, we compute the  \LAL-curves for a subsequent experiment ($\nDataNew=1$) using three different loss functions,
\begin{align}
\text{overshoot loss}\quad{}& \loss(\features{},\labely{}) = \max(0, \model(\features)-\labely{})\label{eq:loss overshoot} \\
\text{undershoot loss} \quad{}&\loss(\features{},\labely{}) = \max(0, \labely{} - \model(\features))\label{eq:loss underhoot} \\
\text{absolute loss} \quad{}&\loss(\features{},\labely{})  = \abs{\labely{} - \model(\features)}\label{eq:loss absolute}
\end{align}
which are all in units of dB to enable a physical  interpretation.  The results are shown in Fig.~\ref{fig:airfoil}, where we observe that $\model$ produces more severe overshoots than undershoots. The chosen loss functions (\ref{eq:loss overshoot}-\ref{eq:loss absolute}) differs from the loss function used for model fitting (squared-error), illustrating the freedom that an analyst has to characterize the performance of $\model$.

\subsection{Model comparison}\label{sec:example:model comparison}
This experiment is concerned with model selection.
The data used is the monthly number of earthquakes worldwide with magnitude $\geq 5$ between 2012 and 2022 \citep{usgs_earthquake_2022}. There are 120 data points $\dataPoint_1,\dots{},\dataPoint_{120}$, with $z_i \in \{ 0,1,2,\dots{} \}$
These are randomly split into $100$ and $20$ data points, forming $\dataSetForModel$ and $\dataSetCal$.
We learn two models of the number of earthquakes per month $\dataPoint$, $\Pr(\dataPointRV=\dataPoint)$, using the maximum likelihood method: a Poisson model $\model^{\text{Poisson}}(\dataPoint)$ and a Negative Binomial model $\model^{\text{NegBin}}(\dataPoint)$.
The models are evaluated using the negative log-likelihood loss
\begin{equation}
    \loss(\dataPoint) = -\log \model(\dataPoint)
\end{equation}
and Fig.~\ref{fig:quakes} presents their respective \LAL-curves for a subsequent earthquake, i.e., $\nDataNew=1$. From this result we can conclude that the simpler one-parameter Poisson model produces much larger out-of-sample losses than the two-parameter Negative Binomial model.

We compare the \LAL{} analysis with a common model selection metric, the Akaike Information Criterion (AIC) \citep{ding2018model}. It evaluates the models as $\textrm{AIC}^{\text{Poisson}}=1\,926.34$  and $\textrm{AIC}^{\text{NegBin}}=1\,015.63$.
One may also compare models by the average loss on the calibration data, yielding $5.69$/$4.78$ nats for the Poisson/NegBin. Both these model selection metrics favor the NegBin model, in agreement with the \LAL{}-curve. Whereas the model selection metrics report a single number, the \LAL{}-curve presents a more nuanced picture, showing the reduction in tail losses.

\begin{figure}
    \begin{minipage}{0.475\textwidth}
        \centering
        \includegraphics[width=\linewidth]{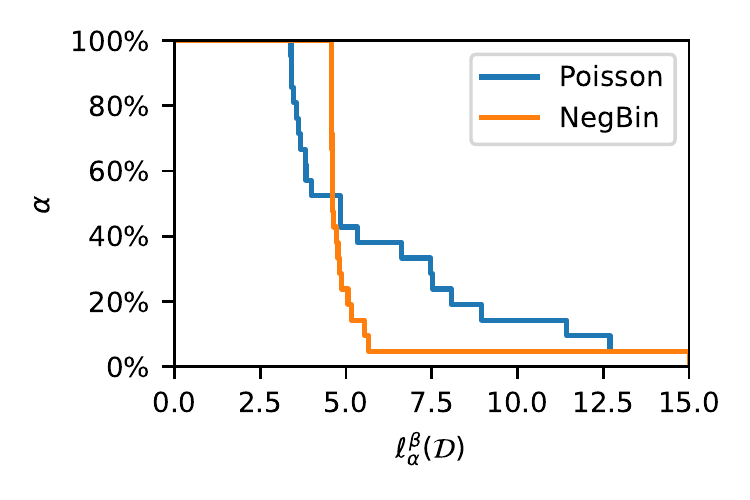}
        \caption{
        Comparison of models for earthquake statistics, using \LAL{} curves and $\nDataNew=1$, $\covFraction=1$.
        The loss function used is the negative log-likelihood of data under the fitted model.
        By considering how it handles the $\confLevel=20\%$ most difficult-to-fit data points, we see that the Negative Binomial model provides a better fit than the Poisson model.}
        \label{fig:quakes}
    \end{minipage}\hfill
    \begin{minipage}{0.475\textwidth}
        \centering
        \includegraphics[width=\linewidth]{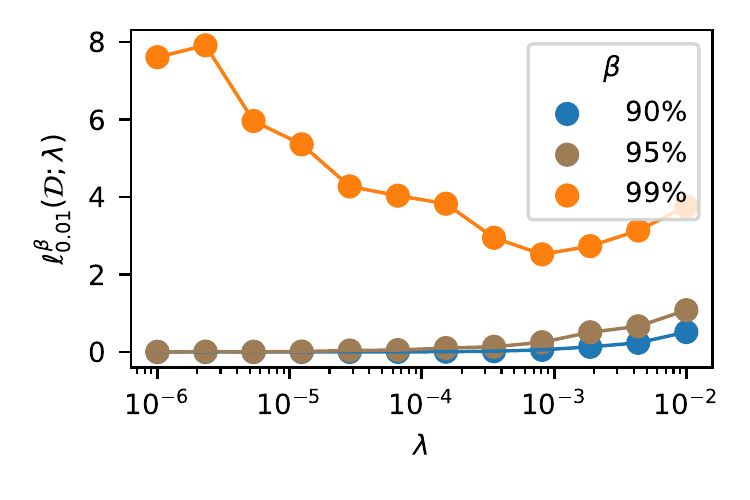}
        \caption[]{
        Relation between a regularization parameter $\weightDecayParam$ in neural network model $\model^{\weightDecayParam}(\features)$, and the corresponding \LAL{} for $\nDataNew\to \infty$, varying $\covFraction$, and $(1-\confLevel)$ = 99\% confidence. 
        Appropriate regularization improves the performance in the tail of the loss distribution ($\covFraction=99\%$), with an optimum around $\weightDecayParam \approx 10^{-3}$.
        The bulk of the losses ($\covFraction=95\%$) are not reduced by regularization; their    \LAL{} increase with $\weightDecayParam{}$.}
        \label{fig:classifier}
    \end{minipage}
\end{figure}

\subsection{Hyper-parameter tuning}\label{sec:exp:hyperparams}
This experiment shows how the \LAL{}-curve can be used for tuning a hyperparameter when training a neural network.

The data set is the MNIST handwritten digits \citep{bottou_comparison_1994}. The training data set $\dataSetForModel$ has $\nDataModel=6\cdot 10 ^ 4$ data points, and the calibration data set $\dataSetCal$ has $\nDataCal=10^4$ data points.
The features $\features_i$ are images of handwritten digits, and the labels $\labely_i$ are integers \texttt{0-9}. Data points are tuples $\dataPoint_i = (\features_i,\labely_i)$.

We construct a family of models $\model^{\weightDecayParam}(\features) = \operatorname{NN}(x;\theta^{\weightDecayParam})$. 
The function $\operatorname{NN}(x;\theta^{\weightDecayParam})$ consists of a dense feed-forward neural network with 3 hidden layers of 250 units each, parameters $\theta^{\weightDecayParam{}}$, ReLU activations and softmax output. 
The parameter $\theta^{\weightDecayParam{}}$ is learned by minimizing the cross entropy, using the Adam optimizer with an L2-regularization parameter $\weightDecayParam$ (aka. `weight decay' in the deep learning literature).
The optimization was run for 100 epochs, employing a batch size of 1024 and learning rate 0.01.
The output of the network $\model^{\weightDecayParam}(\features)$ is a 10-dimensional vector, where the $i$th component approximates 
$\Pr[\labely=i|\features]$.

We evaluate the model using the negative log-likelihood loss 
\begin{equation}
    \loss(\features,\labely{}) = -\log \model^{\weightDecayParam}_{\labely}(\features) = -\log \left[\operatorname{NN}(x;\theta^{\weightDecayParam})\right]_\labely
\end{equation}
Since deep learning methods may be deployed without retraining over a large number of predictions, we consider the case $\nDataNew\to\infty$ (Cor.~\ref{cor:ex to iid}) and study the \LAL{} for a $\covFraction$-fraction of out-of-sample losses. 
The results in Fig.~\ref{fig:classifier} show that small regularization initially reduces losses for outliers without increasing loss on nominal samples.

\section{Discussion}

This section elaborates on connections to related fields.

\subsection{Model evaluation and selection techniques}\label{sec:model evaluation techniques}

Cross-validatory out-of-sample expected loss estimation is arguably the most common approach to model evaluation in machine learning \citep{arlot_survey_2010,stone_cross-validatory_1974,bishop_pattern_2006,hastie_elements_2009}. 
The idea is that model performance is measured by expected loss on out-of-sample data (called the \emph{risk}), and cross-validation estimates this quantity.
In its most common form, a fraction of the training data is held out from model fitting. The average loss of the model is computed on the held out data, forming an estimate of the risk. Repeated splitting and refitting of the model (k-fold cross-validation) can be used to estimate the bias and variance of such estimates.
Model evaluation with \LAL{} shifts focus to evaluating model performance on probabalistic bounds on out-of-sample losses. This is significant when the out-of-sample loss distribution is multimodal, skewed or otherwise not well described by its mean and variance alone.

Evaluating models with respect to the risk may be done without cross-validation.
Statistical learning theories, such as the VC-theory \citep{vapnik_principles_1991}, provide asymptotic bounds on the risk under certain assumptions on the models and the data.
Similarly, M-estimation \citep{vaart_asymptotic_1998} provides another asymptotically valid method of fitting parametric models, quantifying the convergence of the average loss on training data to the risk.
If the losses are bounded and the samples are iid or constructed as sampling without replacement, non-parametric results for inferring the risk appear to be promising \citep{waudby-smith_estimating_2022}, improving on both the Hoeffding inequality and the Empirical Bernstein bounds \citep{audibert_explorationexploitation_2009,maurer_empirical_2009}.
For unbounded losses and non-iid exchangeable data, the inferential problem of estimating the risk remain open.
\LAL{}-curves provide a nonparemetric and nonasymptotic way to evaluate model performance by not using the risk as the quantity of comparison.

In this work, we have computed $\lossLevel$ at given confidence levels $\confLevel$. 
Conversely, one can interpret $\lossLevel$ as the boundary value for rejection of the null hypothesis that model predictions are exchangeable, at level $\confLevel$.
Others have used hypothesis testing for model evaluation.
Posterior predictive checks \citep{gelman_bayesian_2013,rubin_bayesianly_1984}, and the data consistency criterion, \citep{lindholm_data_2019} rely on exchangeability between observed data and data generated by the model to this end.
Those methods are used to test whether a model is compatible with data, and reports a $p$-value for the test. A \LAL{}-based analysis acknowledges that models are always misspecified and instead quantifies how well a model performs. 

An analyst may wish to not only diagnose and compare models, but also decide which model in a set of candidates to use. This is called \emph{model selection} \citep{ding2018model}.  Two principal goals motivate the model selection algorithm: 1) model selection for inference, or 2) model selection for prediction. The first goal has been used to motivate well known information criteria such as the AIC \citep{stoica2004modelorder}. Information critera has also been combined with hypothesis testing to assess if a certain model or model class can be said to be significantly closer to the true one than other candidates \citep{vuong1989likelihood}. Regarding the second goal, \citet{ding2018model} posits that the `best' model is the one with smallest expected out-of-sample loss, which may be estimated by cross-validation. 
By fixing $\confLevel$ at any desired level, the \LAL{} can be used to rank models in fashion similar to cross-validatory risk estimation. However, by employing the full \LAL{}-curve as indicated in Sec.~\ref{sec:example:model comparison}, a better informed decision can be taken by balancing typical  versus outlier performance.

The Value-at-Risk (VaR) is a risk measurement used in the finance industry. For a random variable $\lossRV$ describing finanical loss, the value-at-risk at level $1-\confLevel$ can be defined as \citep[Eq 5.1]{jorion2007valueatrisk}
\[
    \operatorname{VaR}_{1-\confLevel}(\lossRV) = \inf \{x:\Pr[\lossRV > x] \leq \confLevel \}
\]
Note that $x$ is not a random variable in the above expression. Comparing with \eqref{eq:lal as ordinal} and fixing $\nDataNew=1$, we find that the VaR is the smallest \LAL{}, conditional on the calibration data $\dataSetCal$. Any VaR constitute a valid \LAL{} for $\nDataNew=1$, but the converse is not true. Many VaR estimation methods rely on parametric assumptions or are only asymptotically valid. \citep{christoffersen2009value,alemany2013anonparametric} The relaxed definition of \LAL{} in contrast to VaR enables the distribution-free methodology presented in this article.
For its use in e.g. financial risk measurements, VaR has been further developed into several variants such as Conditional VaR, Tilted VaR, Entropic VaR, and more. \citep{rockafellar2002conditional,li2021tilted} This line of reasearch may be beneficial for the \LAL{} as a diagnostic tool, and constitute a direction for further research, but some of the requirements on financial risk metrics may not necessarily carry over to all model performance metrics.

\subsection{Non-parametric statistics and conformal prediction}

For ease of exposition, the \LAL{} has primarily been discussed as a point value $\lossLevel$. One can also consider it as the boundary point of the interval $(-\infty,\lossLevel{}]$.
As such, it can be understood as a variant of a non-parametric tolerance interval (Thm.~\ref{thm:tolerance}), a prediction interval (Thm.~\ref{thm:m1}) or a confidence interval (Rem.~\ref{rem:minf conf int}). See for instance \citet{vardeman_1992_what} for a comparison between the different statistical intervals.

\citet{fligner_applications_1976} show how to construct non-asymptotic, non-parametric prediction intervals for quantiles of future data, and is a methodological precursor for the results in this paper.

Considering \LAL{} a confidence interval of a quantile for iid data, there are other results with exact coverage \citep{zielenski_2005_best}, whereas the formula presented in this article is sometimes conservative. One could use that method with \LAL{} to get exact guarantee.
Such intervals are constructed via extra randomization and become harder to interpret. The \LAL{}-curves in specific have no exact counterpart. We have therefore chosen to avoid this construction.


The theory of nonparametric prediction intervals also forms a foundation for conformal prediction.
This field focuses on producing \emph{prediction sets} for the output of any predictive model $\model$, see e.g. \cite{vovk_algorithmic_2005} or \cite{angelopoulos_2022_agentle} for introductions to the field. 
We will clarify the connection between conformal prediction and the \LAL{}, in the case of split-conformal inference. The general case is essentially identical. 

Consider a set of exchangeable random vectors 
$\{\conformalVec_i\}_{i=1}^{M+1}$ taking values in $\conformalVecSpace$. We wish to produce a prediction set $\mathcal{C}_{\confLevel}\left(\{\conformalVec_i\}_{i=1}^{M}\right)$ so that 
\[
    \Pr[\conformalVec_{M+1} \in \mathcal{C}_{\confLevel}\left(\{\conformalVec_i\}_{i=1}^{M}\right)] \geq 1-\confLevel
\]
To this end, define a real-valued \emph{nonconformity score} $A:\conformalVec_i\mapsto A(\conformalVec_i)=A_i$, with the semantic that a large value means $\conformalVec_i$ does not conform to the general data set.
Since the score is real valued, one can employ similar principles as in Thm.~\ref{thm:m1} to define a prediction interval $\mathcal{A}_{\alpha}$ such that
\[
    \Pr[A_{M+1} \in \mathcal{A}_{\confLevel}\left(\{A_i\}_{i=1}^{M}\right)] \geq 1-\confLevel
\]
By letting $\mathcal{C}_{\confLevel}$ be the inverse image of $\mathcal{A}_{\confLevel}\left( \{A_i\} \right)$ under $A$, i.e.,
\[
    \mathcal{C}_{\confLevel}\left(\{\conformalVec_i\}\right) = \left\{ w | A(w) \in \mathcal{A}_{\confLevel}\left(\{A_i\}_{i=1}^{M}\right)\right\},
\]
we ensure the desired coverage. Conformal prediction methodology is thus largely centered on finding suitable nonconformity scores $A$ that are computationally tractable and handle various inference targets and data distributions \citep{angelopoulos_2022_agentle}.


More recently \citep[Thm. 2.1]{lei_distribution-free_2018}, an upper bound on the coverage rate was derived,
\[
    1-\confLevel +\frac{1}{M+1} \geq \Pr[\conformalVec_{M+1} \in \mathcal{C}_{\confLevel}\left(\{\conformalVec_i\}\right)] \geq 1-\confLevel,
\]
that holds  whenever the non-conformity scores are almost surely unique. The line of reasoning is similar but not identical to Thm.~\ref{thm:m1}.

\section{Conclusion}

We have proposed the level-$\confLevel$ loss (\LAL{}) curve as a diagonistic tool for out-of-sample analysis of a model $\model$. The method requires specifying a loss function of interest and the access to a calibration data set.
In return it provides finite-sample guarantees about the probability of a batch of out-of-sample losses exceeding a certain threshold.
The \LAL{} is simple to compute and easy to interpret. A series of numerical experiments have been presented to show its usefulness in regression error analysis, distribution shift analysis, model selection and hyper-parameter tuning. We anticipate that there are many other areas of applications for this methodology.




\bibliography{main}

\begin{thebibliography}{50}
\providecommand{\natexlab}[1]{#1}
\providecommand{\url}[1]{\texttt{#1}}
\expandafter\ifx\csname urlstyle\endcsname\relax
  \providecommand{\doi}[1]{doi: #1}\else
  \providecommand{\doi}{doi: \begingroup \urlstyle{rm}\Url}\fi

\bibitem[Alemany et~al.(2013)Alemany, Bolancé, and
  Guillén]{alemany2013anonparametric}
Ramon Alemany, Catalina Bolancé, and Montserrat Guillén.
\newblock A nonparametric approach to calculating value-at-risk.
\newblock \emph{Insurance: Mathematics and Economics}, 52\penalty0
  (2):\penalty0 255--262, 2013.
\newblock ISSN 0167-6687.
\newblock \doi{https://doi.org/10.1016/j.insmatheco.2012.12.008}.
\newblock URL
  \url{https://www.sciencedirect.com/science/article/pii/S0167668713000048}.

\bibitem[Angelopoulos \& Bates(2021)Angelopoulos and
  Bates]{angelopoulos_2022_agentle}
Anastasios~N. Angelopoulos and Stephen Bates.
\newblock A gentle introduction to conformal prediction and distribution-free
  uncertainty quantification, 2021.
\newblock URL \url{https://arxiv.org/abs/2107.07511}.

\bibitem[Arlot \& Celisse(2010)Arlot and Celisse]{arlot_survey_2010}
Sylvain Arlot and Alain Celisse.
\newblock A survey of cross-validation procedures for model selection.
\newblock \emph{Statistics Surveys}, 4, Jan 2010.
\newblock ISSN 1935-7516.
\newblock \doi{10.1214/09-SS054}.
\newblock URL
  \url{https://projecteuclid.org/journals/statistics-surveys/volume-4/issue-none/A-survey-of-cross-validation-procedures-for-model-selection/10.1214/09-SS054.full}.

\bibitem[Audibert et~al.(2009)Audibert, Munos, and
  Szepesvári]{audibert_explorationexploitation_2009}
Jean-Yves Audibert, Rémi Munos, and Csaba Szepesvári.
\newblock Exploration–exploitation tradeoff using variance estimates in
  multi-armed bandits.
\newblock \emph{Theoretical Computer Science}, 410\penalty0 (19):\penalty0
  1876--1902, April 2009.
\newblock ISSN 03043975.
\newblock \doi{10.1016/j.tcs.2009.01.016}.
\newblock URL
  \url{https://linkinghub.elsevier.com/retrieve/pii/S030439750900067X}.

\bibitem[Belsley et~al.(1980)Belsley, Kuh, and Welsch]{belsley1980regression}
David~A. Belsley, Edwin Kuh, and Roy~E. Welsch.
\newblock \emph{Regression Diagnostics: Identifying Influential Data and
  Sources of Collinearity}.
\newblock Wiley Series in Probability and Statistics. Wiley, 1 edition, Jun
  1980.
\newblock ISBN 978-0-471-05856-4.
\newblock \doi{10.1002/0471725153}.
\newblock URL
  \url{https://onlinelibrary.wiley.com/doi/book/10.1002/0471725153}.

\bibitem[Bishop(2006)]{bishop_pattern_2006}
Christopher~M. Bishop.
\newblock \emph{Pattern recognition and machine learning}.
\newblock Information science and statistics. Springer, New York, 2006.
\newblock ISBN 978-0-387-31073-2.

\bibitem[Bogoya et~al.(2016)Bogoya, Böttcher, and
  Maximenko]{bogoya_convergence_2016}
J.~M. Bogoya, A.~Böttcher, and E.~A. Maximenko.
\newblock From convergence in distribution to uniform convergence.
\newblock \emph{Boletín de la Sociedad Matemática Mexicana}, 22\penalty0
  (2):\penalty0 695--710, October 2016.
\newblock ISSN 1405-213X, 2296-4495.
\newblock \doi{10.1007/s40590-016-0105-y}.
\newblock URL \url{http://link.springer.com/10.1007/s40590-016-0105-y}.

\bibitem[Bottou et~al.(1994)Bottou, Cortes, Denker, Drucker, Guyon, Jackel,
  Le~Cun, Muller, Säckinger, Simard, and Vapnik]{bottou_comparison_1994}
Léon Bottou, Corinna Cortes, John~S. Denker, Harris Drucker, Isabelle Guyon,
  Lawrence~D. Jackel, Yann Le~Cun, Urs~A. Muller, Eduard Säckinger, Patrice
  Simard, and Vladimir Vapnik.
\newblock Comparison of classifier methods: a case study in handwritten digit
  recognition.
\newblock In \emph{Proceedings of the 12th {IAPR} {International} {Conference}
  on {Pattern} {Recognition}, {Conference} {B}: {Computer} {Vision} \& {Image}
  {Processing}.}, volume~2, pp.\  77--82, Jerusalem, October 1994. IEEE.
\newblock URL \url{http://leon.bottou.org/papers/bottou-cortes-94}.

\bibitem[Casper et~al.(2022)Casper, Hariharan, and
  Hadfield-Menell]{casper2022diagnostics}
Stephen Casper, Kaivalya Hariharan, and Dylan Hadfield-Menell.
\newblock Diagnostics for deep neural networks with automated copy/paste
  attacks.
\newblock In \emph{NeurIPS ML Safety Workshop}, 2022.
\newblock URL \url{https://openreview.net/forum?id=l-kqvueSRp7}.

\bibitem[Christoffersen(2009)]{christoffersen2009value}
Peter Christoffersen.
\newblock Value--at--risk models.
\newblock In \emph{Handbook of financial time series}, pp.\  753--766.
  Springer, 2009.

\bibitem[Ding et~al.(2018)Ding, Tarokh, and Yang]{ding2018model}
Jie Ding, Vahid Tarokh, and Yuhong Yang.
\newblock Model selection techniques: An overview.
\newblock \emph{IEEE Signal Processing Magazine}, 35\penalty0 (6):\penalty0
  16--34, 2018.
\newblock \doi{10.1109/MSP.2018.2867638}.
\newblock URL \url{https://doi.org/10.1109/MSP.2018.2867638}.

\bibitem[Dua \& Graff(2017)Dua and Graff]{dua_uci_2017}
Dheeru Dua and Casey Graff.
\newblock {UCI} {Machine} {Learning} {Repository}, 2017.
\newblock URL \url{http://archive.ics.uci.edu/ml}.

\bibitem[Fitzmaurice et~al.(2011)Fitzmaurice, Laird, and
  Ware]{fitzmaurice_applied_2011}
Garrett~M. Fitzmaurice, Nan~M. Laird, and James~H. Ware.
\newblock \emph{Applied longitudinal analysis}.
\newblock Wiley series in probability and statistics. Wiley, Hoboken, N.J, 2nd
  ed edition, 2011.
\newblock ISBN 978-0-470-38027-7.

\bibitem[Fligner \& Wolfe(1976)Fligner and Wolfe]{fligner_applications_1976}
Michael~A. Fligner and Douglas~A. Wolfe.
\newblock Some {Applications} of {Sample} {Analogues} to the {Probability}
  {Integral} {Transformation} and a {Coverage} {Property}.
\newblock \emph{The American Statistician}, 30\penalty0 (2):\penalty0 78, May
  1976.
\newblock ISSN 00031305.
\newblock \doi{10.2307/2683799}.
\newblock URL \url{https://www.jstor.org/stable/2683799?origin=crossref}.

\bibitem[Gama et~al.(2004)Gama, Medas, Castillo, and
  Rodrigues]{gama2004learning}
João Gama, Pedro Medas, Gladys Castillo, and Pedro Rodrigues.
\newblock Learning with drift detection.
\newblock In Ana L.~C. Bazzan and Sofiane Labidi (eds.), \emph{Advances in
  Artificial Intelligence – SBIA 2004}, volume 3171 of \emph{Lecture Notes in
  Computer Science}, pp.\  286–295. Springer Berlin Heidelberg, Berlin,
  Heidelberg, 2004.
\newblock ISBN 978-3-540-23237-7.
\newblock \doi{10.1007/978-3-540-28645-5_29}.
\newblock URL \url{http://link.springer.com/10.1007/978-3-540-28645-5_29}.

\bibitem[Gelman et~al.(2013)Gelman, Carlin, Stern, Dunson, Vehtari, and
  Rubin]{gelman_bayesian_2013}
Andrew Gelman, John~B Carlin, Hal~S Stern, David~B Dunson, Aki Vehtari, and
  Donald~B Rubin.
\newblock \emph{Bayesian {Data} {Analysis}, {Third} {Edition}.}
\newblock CRC Press, Hoboken, 2013.
\newblock ISBN 978-1-4398-9820-8.

\bibitem[Hastie et~al.(2009)Hastie, Tibshirani, and
  Friedman]{hastie_elements_2009}
Trevor Hastie, Robert Tibshirani, and Jerome~H. Friedman.
\newblock \emph{The elements of statistical learning: data mining, inference,
  and prediction}.
\newblock Springer series in statistics. Springer, New York, NY, 2nd edition,
  2009.
\newblock ISBN 978-0-387-84857-0.

\bibitem[Horst et~al.(2020)Horst, Hill, and Gorman]{horst_palmerpenguins_2020}
Allison~Marie Horst, Alison~Presmanes Hill, and Kristen~B Gorman.
\newblock palmerpenguins: {Palmer} {Archipelago} ({Antarctica}) penguin data,
  2020.
\newblock URL \url{https://allisonhorst.github.io/palmerpenguins/}.

\bibitem[Jorion(2007)]{jorion2007valueatrisk}
Philippe Jorion.
\newblock \emph{Value at risk: the new benchmark for managing financial risk}.
\newblock McGraw-Hill, New York, 3rd ed edition, 2007.
\newblock ISBN 978-0-07-173692-3.

\bibitem[Kay(1993)]{kay_fundamentals_1993}
Steven~M. Kay.
\newblock \emph{Fundamentals of statistical signal processing}.
\newblock Prentice {Hall} signal processing series. Prentice-Hall PTR,
  Englewood Cliffs, N.J, 1993.
\newblock ISBN 978-0-13-345711-7 978-0-13-504135-2 978-0-13-280803-3.

\bibitem[Kelley~Pace \& Barry(1997)Kelley~Pace and Barry]{kelley_1997_sparse}
R.~Kelley~Pace and Ronald Barry.
\newblock Sparse spatial autoregressions.
\newblock \emph{Statistics \& Probability Letters}, 33\penalty0 (3):\penalty0
  291–297, May 1997.
\newblock ISSN 01677152.
\newblock \doi{10.1016/S0167-7152(96)00140-X}.
\newblock URL
  \url{https://linkinghub.elsevier.com/retrieve/pii/S016771529600140X}.

\bibitem[Kingman(1978)]{kingman_uses_1978}
J.~F.~C. Kingman.
\newblock Uses of exchangeability.
\newblock \emph{The Annals of Probability}, 6\penalty0 (2), Apr 1978.
\newblock ISSN 0091-1798.
\newblock \doi{10.1214/aop/1176995566}.
\newblock URL
  \url{https://projecteuclid.org/journals/annals-of-probability/volume-6/issue-2/Uses-of-Exchangeability/10.1214/aop/1176995566.full}.

\bibitem[Lee et~al.(2023)Lee, Chen, Tajwar, Kumar, Yao, Liang, and
  Finn]{lee2023surgical}
Yoonho Lee, Annie~S Chen, Fahim Tajwar, Ananya Kumar, Huaxiu Yao, Percy Liang,
  and Chelsea Finn.
\newblock Surgical fine-tuning improves adaptation to distribution shifts.
\newblock In \emph{The Eleventh International Conference on Learning
  Representations}, 2023.
\newblock URL \url{https://openreview.net/forum?id=APuPRxjHvZ}.

\bibitem[Lei et~al.(2018)Lei, G’Sell, Rinaldo, Tibshirani, and
  Wasserman]{lei_distribution-free_2018}
Jing Lei, Max G’Sell, Alessandro Rinaldo, Ryan~J. Tibshirani, and Larry
  Wasserman.
\newblock Distribution-{Free} {Predictive} {Inference} for {Regression}.
\newblock \emph{Journal of the American Statistical Association}, 113\penalty0
  (523):\penalty0 1094--1111, July 2018.
\newblock ISSN 0162-1459, 1537-274X.
\newblock \doi{10.1080/01621459.2017.1307116}.
\newblock URL
  \url{https://www.tandfonline.com/doi/full/10.1080/01621459.2017.1307116}.

\bibitem[Li et~al.(2021)Li, Beirami, Sanjabi, and Smith]{li2021tilted}
Tian Li, Ahmad Beirami, Maziar Sanjabi, and Virginia Smith.
\newblock Tilted empirical risk minimization, Mar 2021.
\newblock URL \url{http://arxiv.org/abs/2007.01162}.
\newblock arXiv:2007.01162 [cs, math, stat].

\bibitem[Lindholm et~al.(2019)Lindholm, Zachariah, Stoica, and
  Schon]{lindholm_data_2019}
Andreas Lindholm, Dave Zachariah, Petre Stoica, and Thomas~B. Schon.
\newblock Data {Consistency} {Approach} to {Model} {Validation}.
\newblock \emph{IEEE Access}, 7:\penalty0 59788--59796, 2019.
\newblock ISSN 2169-3536.
\newblock \doi{10.1109/ACCESS.2019.2915109}.
\newblock URL \url{https://ieeexplore.ieee.org/document/8708204/}.

\bibitem[Lipton et~al.(2018)Lipton, Wang, and Smola]{lipton2018detecting}
Zachary Lipton, Yu-Xiang Wang, and Alexander Smola.
\newblock Detecting and correcting for label shift with black box predictors.
\newblock In Jennifer Dy and Andreas Krause (eds.), \emph{Proceedings of the
  35th International Conference on Machine Learning}, volume~80 of
  \emph{Proceedings of Machine Learning Research}, pp.\  3122--3130. PMLR,
  10--15 Jul 2018.
\newblock URL \url{https://proceedings.mlr.press/v80/lipton18a.html}.

\bibitem[Lu et~al.(2018)Lu, Liu, Dong, Gu, Gama, and Zhang]{lu_learning_2018}
Jie Lu, Anjin Liu, Fan Dong, Feng Gu, Joao Gama, and Guangquan Zhang.
\newblock Learning under {Concept} {Drift}: {A} {Review}.
\newblock \emph{IEEE Transactions on Knowledge and Data Engineering}, pp.\
  1--1, 2018.
\newblock ISSN 1041-4347, 1558-2191, 2326-3865.
\newblock \doi{10.1109/TKDE.2018.2876857}.
\newblock URL \url{http://arxiv.org/abs/2004.05785}.
\newblock arXiv: 2004.05785.

\bibitem[Lu et~al.(2014)Lu, Zhang, and Lu]{lu_concept_2014}
Ning Lu, Guangquan Zhang, and Jie Lu.
\newblock Concept drift detection via competence models.
\newblock \emph{Artificial Intelligence}, 209:\penalty0 11--28, April 2014.
\newblock ISSN 00043702.
\newblock \doi{10.1016/j.artint.2014.01.001}.
\newblock URL
  \url{https://linkinghub.elsevier.com/retrieve/pii/S0004370214000034}.

\bibitem[Maurer \& Pontil(2009)Maurer and Pontil]{maurer_empirical_2009}
Andreas Maurer and Massimiliano Pontil.
\newblock Empirical {Bernstein} {Bounds} and {Sample}-{Variance}
  {Penalization}.
\newblock In \emph{{COLT} 2009 {Proceedings}}, Montreal, Quebec, Canada, 2009.
\newblock URL \url{https://www.cs.mcgill.ca/~colt2009/papers/012.pdf}.

\bibitem[Quiñonero-Candela et~al.(2009)Quiñonero-Candela, Sugyiama,
  Schwaighofer, and Lawrence]{quinonero-candela_dataset_2009}
Joaquin Quiñonero-Candela, Masashi Sugyiama, Anton Schwaighofer, and Neil~D.
  Lawrence (eds.).
\newblock \emph{Dataset shift in machine learning}.
\newblock Neural information processing series. MIT Press, Cambridge, Mass,
  2009.
\newblock ISBN 978-0-262-17005-5.

\bibitem[Rabanser et~al.(2019)Rabanser, Günnemann, and
  Lipton]{rabanser_failing_2019}
Stephan Rabanser, Stephan Günnemann, and Zachary Lipton.
\newblock Failing {Loudly}: {An} {Empirical} {Study} of {Methods} for
  {Detecting} {Dataset} {Shift}.
\newblock In H.~Wallach, H.~Larochelle, A.~Beygelzimer, F.~d' Alché-Buc,
  E.~Fox, and R.~Garnett (eds.), \emph{Advances in {Neural} {Information}
  {Processing} {Systems}}, volume~32. Curran Associates, Inc., 2019.
\newblock URL
  \url{https://proceedings.neurips.cc/paper/2019/file/846c260d715e5b854ffad5f70a516c88-Paper.pdf}.

\bibitem[Rahimi \& Recht(2008)Rahimi and Recht]{rahmi_2008_weighted}
Ali Rahimi and Benjamin Recht.
\newblock Weighted sums of random kitchen sinks: Replacing minimization with
  randomization in learning.
\newblock In D.~Koller, D.~Schuurmans, Y.~Bengio, and L.~Bottou (eds.),
  \emph{Advances in Neural Information Processing Systems}, volume~21. Curran
  Associates, Inc., 2008.
\newblock URL
  \url{https://proceedings.neurips.cc/paper/2008/file/0efe32849d230d7f53049ddc4a4b0c60-Paper.pdf}.

\bibitem[Robbins(1955)]{robbins_1955_remark}
Herbert Robbins.
\newblock A remark on stirling's formula.
\newblock \emph{The American Mathematical Monthly}, 62\penalty0 (1):\penalty0
  26--29, 1955.
\newblock ISSN 00029890, 19300972.
\newblock URL \url{http://www.jstor.org/stable/2308012}.

\bibitem[Rockafellar \& Uryasev(2002)Rockafellar and
  Uryasev]{rockafellar2002conditional}
R.Tyrrell Rockafellar and Stanislav Uryasev.
\newblock Conditional value-at-risk for general loss distributions.
\newblock \emph{Journal of Banking \& Finance}, 26\penalty0 (7):\penalty0
  1443--1471, 2002.
\newblock ISSN 0378-4266.
\newblock \doi{https://doi.org/10.1016/S0378-4266(02)00271-6}.
\newblock URL
  \url{https://www.sciencedirect.com/science/article/pii/S0378426602002716}.

\bibitem[Rubin(1984)]{rubin_bayesianly_1984}
Donald~B. Rubin.
\newblock Bayesianly {Justifiable} and {Relevant} {Frequency} {Calculations}
  for the {Applied} {Statistician}.
\newblock \emph{The Annals of Statistics}, 12\penalty0 (4):\penalty0
  1151--1172, December 1984.
\newblock ISSN 0090-5364.
\newblock \doi{10.1214/aos/1176346785}.
\newblock URL \url{http://projecteuclid.org/euclid.aos/1176346785}.

\bibitem[Ruppert et~al.(2003)Ruppert, Wand, and
  Carroll]{ruppert_semiparametric_2003}
David Ruppert, M.~P. Wand, and R.~J. Carroll.
\newblock \emph{Semiparametric {Regression}}.
\newblock Cambridge {Series} in {Statistical} and {Probabilistic}
  {Mathematics}. Cambridge University Press, 2003.
\newblock ISBN 978-0-511-75545-3.

\bibitem[Stoica \& Selen(2004)Stoica and Selen]{stoica2004modelorder}
P.~Stoica and Y.~Selen.
\newblock Model-order selection: a review of information criterion rules.
\newblock \emph{IEEE Signal Processing Magazine}, 21\penalty0 (4):\penalty0
  36–47, Jul 2004.
\newblock ISSN 1558-0792.
\newblock \doi{10.1109/MSP.2004.1311138}.

\bibitem[Stone(1974)]{stone_cross-validatory_1974}
M.~Stone.
\newblock Cross-{Validatory} {Choice} and {Assessment} of {Statistical}
  {Predictions}.
\newblock \emph{Journal of the Royal Statistical Society: Series B
  (Methodological)}, 36\penalty0 (2):\penalty0 111--133, January 1974.
\newblock ISSN 00359246.
\newblock \doi{10.1111/j.2517-6161.1974.tb00994.x}.
\newblock URL
  \url{https://onlinelibrary.wiley.com/doi/10.1111/j.2517-6161.1974.tb00994.x}.

\bibitem[Söderström \& Stoica(2001)Söderström and
  Stoica]{soderstrom_system_2019}
Torsten Söderström and Petre Stoica.
\newblock \emph{System identification}.
\newblock Prentice {Hall} international series in systems and control
  engineering. Prentice-Hall, New York, NY, reprint edition, 2001.
\newblock ISBN 978-0-13-881236-2.

\bibitem[Tibshirani et~al.(2020)Tibshirani, Barber, Candès, and
  Ramdas]{tibshirani_conformal_2020}
Ryan~J. Tibshirani, Rina~Foygel Barber, Emmanuel~J. Candès, and Aaditya
  Ramdas.
\newblock Conformal {Prediction} {Under} {Covariate} {Shift}.
\newblock \emph{arXiv:1904.06019 [stat]}, July 2020.
\newblock URL \url{http://arxiv.org/abs/1904.06019}.

\bibitem[{USGS}(2022)]{usgs_earthquake_2022}
{USGS}.
\newblock Earthquake {Catalog}, 2022.
\newblock URL
  \url{https://earthquake.usgs.gov/fdsnws/event/1/query.csv?starttime=2012-01-01%2000:00:00&endtime=2022-01-01%2000:00:00&minmagnitude=5&orderby=time}.

\bibitem[Vaart(1998)]{vaart_asymptotic_1998}
A.~W. van~der Vaart.
\newblock \emph{Asymptotic {Statistics}}.
\newblock Cambridge University Press, 1 edition, October 1998.
\newblock \doi{10.1017/CBO9780511802256}.
\newblock URL
  \url{https://www.cambridge.org/core/product/identifier/9780511802256/type/book}.

\bibitem[Vapnik(1991)]{vapnik_principles_1991}
V.~Vapnik.
\newblock Principles of {Risk} {Minimization} for {Learning} {Theory}.
\newblock In J.~Moody, S.~Hanson, and R.~P. Lippmann (eds.), \emph{Advances in
  {Neural} {Information} {Processing} {Systems}}, volume~4. Morgan-Kaufmann,
  1991.
\newblock URL
  \url{https://proceedings.neurips.cc/paper/1991/file/ff4d5fbbafdf976cfdc032e3bde78de5-Paper.pdf}.

\bibitem[Vardeman(1992)]{vardeman_1992_what}
Stephen~B. Vardeman.
\newblock What about the other intervals?
\newblock \emph{The American Statistician}, 46\penalty0 (3):\penalty0 193, Aug
  1992.
\newblock ISSN 00031305.
\newblock \doi{10.2307/2685212}.
\newblock URL \url{https://www.jstor.org/stable/2685212?origin=crossref}.

\bibitem[Vovk et~al.(2005)Vovk, Gammerman, and Shafer]{vovk_algorithmic_2005}
Vladimir Vovk, A.~Gammerman, and Glenn Shafer.
\newblock \emph{Algorithmic learning in a random world}.
\newblock Springer, New York, 2005.
\newblock URL \url{https://doi.org/10.1007/b106715}.

\bibitem[Vuong(1989)]{vuong1989likelihood}
Quang~H. Vuong.
\newblock Likelihood ratio tests for model selection and non-nested hypotheses.
\newblock \emph{Econometrica}, 57\penalty0 (2):\penalty0 307, Mar 1989.
\newblock ISSN 00129682.
\newblock \doi{10.2307/1912557}.
\newblock URL \url{https://www.jstor.org/stable/1912557?origin=crossref}.

\bibitem[Waudby-Smith \& Ramdas(2020)Waudby-Smith and
  Ramdas]{waudby-smith_estimating_2022}
Ian Waudby-Smith and Aaditya Ramdas.
\newblock Estimating means of bounded random variables by betting, 2020.
\newblock URL \url{https://arxiv.org/abs/2010.09686}.

\bibitem[Zhang et~al.(2022)Zhang, HaoChen, Huang, Wang, Zou, and
  Yeung]{zhang2022drml}
Yuhui Zhang, Jeff~Z HaoChen, Shih-Cheng Huang, Kuan-Chieh Wang, James Zou, and
  Serena Yeung.
\newblock Drml: Diagnosing and rectifying vision models using language.
\newblock In \emph{NeurIPS ML Safety Workshop}, 2022.
\newblock URL \url{https://openreview.net/forum?id=losu6IAaPeB}.

\bibitem[Zieliński \& Zieliński(2005)Zieliński and
  Zieliński]{zielenski_2005_best}
Ryszard Zieliński and Wojciech Zieliński.
\newblock Best exact nonparametric confidence intervals for quantiles.
\newblock \emph{Statistics}, 39\penalty0 (1):\penalty0 67--71, 2005.
\newblock URL \url{https://doi.org/10.1080/02331880412331329854}.

\end{thebibliography}
\bibliographystyle{tmlr}

\appendix

\section{Details for Proof of Corollary~\ref{cor:ex to iid}}
\label{sec:details for cor:ex to iid}

We must first show that 
\[ \lim_{m\to \infty} \frac{\binom{\nDataCal-j+\nDataNew-\ceil{m\covFraction}}{\nDataCal-j}\binom{j+\ceil{m\covFraction}-1}{j}}{\binom{\nDataCal+\nDataNew}{\nDataNew}} = \binom{\nDataCal}{j}\covFraction^{j}(1-\covFraction)^{\nDataCal-j} \]

By definition of binomial coefficient, this is
\[
\lim_{m\to \infty}
\frac{(\nDataCal-j+\nDataNew-\ceil{\nDataNew\covFraction})!(j+\ceil{\nDataNew\covFraction}-1)!m!n!}
{ (\nDataCal-j)!(\nDataNew-\ceil{\nDataNew\covFraction})!j!(\ceil{\nDataNew\covFraction}-1)!(\nDataCal+\nDataNew)!}
\]

Rearrangement of factors gives
\[
\binom{n}{j}
\lim_{m\to \infty}
\frac{(\nDataCal-j+\nDataNew-\ceil{\nDataNew\covFraction})!(j+\ceil{\nDataNew\covFraction}-1)!\nDataNew!}
{(\nDataNew-\ceil{\nDataNew\covFraction})!(\ceil{\nDataNew\covFraction}-1)!(\nDataCal+\nDataNew)!}
\]

So we must now show that
\[\lim_{m\to \infty}
\underbrace{\frac{
    (\nDataCal-j+\nDataNew-\ceil{\nDataNew\covFraction})!(j+\ceil{\nDataNew\covFraction}-1)!m!
        }{
    (\nDataNew-\ceil{\nDataNew\covFraction})!(\ceil{\nDataNew\covFraction}-1)!(\nDataCal+\nDataNew)!}}_{\eqqcolon{}H(\nDataNew)} = \covFraction^{j}(1-\covFraction)^{\nDataCal-j
    }
\]

By the upper and lower bounds in Stirling's formula \citep{robbins_1955_remark}.
\[ \sqrt{2 \pi n}\ \left(\frac{n}{e}\right)^n e^{\frac{1}{12n+1}} < n! \leq \sqrt{2 \pi n}\ \left(\frac{n}{e}\right)^n e^{\frac{1}{12n}} \]
Let $\delta \coloneqq \ceil{\nDataNew\covFraction} - \nDataNew\covFraction $. We introduce 
{\small
\begin{align*} 
    h(m,Q) 
    =& \left(2\pi(\nDataCal-j+\nDataNew(1-\covFraction)-\delta)\right)^{1/2}
        &&\left(\frac{\nDataCal-j+\nDataNew(1-\covFraction)-\delta}{e}\right)^{\nDataCal-j+\nDataNew(1-\covFraction)-\delta}
        &&\exp\left[\frac{1}{12(\nDataCal-j+\nDataNew(1-\covFraction)-\delta)+Q}\right]\\
    \times & \left(2\pi(j+\nDataNew\covFraction+\delta-1)\right)^{1/2}
        &&\left(\frac{j+\nDataNew\covFraction+\delta-1}{e}\right)^{j+\nDataNew\covFraction+\delta-1}
        &&\exp\left[\frac{1}{12(j+\nDataNew\covFraction+\delta-1)+Q}\right]\\
    \times & \left(2\pi\nDataNew\right)^{1/2}
        &&\left(\frac{\nDataNew}{e}\right)^{\nDataNew}
        &&\exp\left[\frac{1}{12\nDataNew+Q}\right]\\
    \times & \left(2\pi(\nDataNew (1-\covFraction)-\delta)\right)^{-1/2}
        &&\left(\frac{e}{\nDataNew (1-\covFraction)-\delta}\right)^{\nDataNew (1-\covFraction)-\delta}
        &&\exp\left[\frac{-1}{12(\nDataNew (1-\covFraction)-\delta)+(1-Q)}\right]\\
    \times & \left(2\pi(\nDataNew\covFraction-1+\delta)\right)^{-1/2}
        &&\left(\frac{e}{\nDataNew\covFraction-1+\delta}\right)^{\nDataNew\covFraction-1+\delta}
        &&\exp\left[\frac{-1}{12(\nDataNew\covFraction-1+\delta)+(1-Q)}\right]\\
    \times & \left(2\pi(\nDataCal+\nDataNew)\right)^{-1/2}
        &&\left(\frac{e}{\nDataCal+\nDataNew}\right)^{\nDataCal+\nDataNew}
        &&\exp\left[\frac{-1}{12(\nDataCal+\nDataNew)+(1-Q)}\right]
\end{align*}
}
allowing us to state succinctly
\[ h(m,1) < H(m) \leq h(m,0)\]
The proof is complete if we can show that $\lim_{\nDataNew\to\infty}h(m,Q) = \covFraction^{j}(1-\covFraction)^{\nDataCal-j}$. To see this we rearrange the factors. We will also use little-oh notation, i.e. $f(\nDataNew)=o(g(\nDataNew))$ iff $\lim_{\nDataNew\to\infty} |f(\nDataNew)|/g(\nDataNew) = 0$. When taking limits, we use that $0\leq\delta<1$ for all $\nDataNew$.

\begin{align*}
   h(\nDataNew,Q) =& 
        \underbrace{\left({\frac{2^3\pi^3}{2^3\pi^3}}{\frac{(n-j+\nDataNew(1-\covFraction)-\delta)(j+\nDataNew\covFraction+\delta-1)\nDataNew}{(\nDataNew (1-\covFraction)-\delta)(\nDataNew\covFraction-1+\delta)(\nDataCal+\nDataNew)}}\right)^{1/2}}_{=1+o(1)} 
   \\ \times &
        \underbrace{
        e^{n-j+\nDataNew(1-\covFraction)-\delta + j+\nDataNew\covFraction+\delta-1 + \nDataNew -\nDataNew (1-\covFraction)+\delta - \nDataNew\covFraction+1-\delta- \nDataCal- \nDataNew) } }_{=1}
   \\ \times & 
        \underbrace{\left(\frac{\nDataCal-j+\nDataNew(1-\covFraction)-\delta}{\nDataNew(1-\covFraction)-\delta} \right)^{\nDataNew(1-\covFraction)-\delta}}_{= \exp(\nDataCal-j)+o(1)}
        \underbrace{\left(\frac{j+\nDataNew\covFraction+\delta-1}{\nDataNew\covFraction+\delta-1} \right)^{\nDataNew\covFraction+\delta-1}}_{= \exp(j) +o(1)}
        \underbrace{\left(\frac{\nDataNew}{\nDataCal+\nDataNew} \right)^{\nDataNew}}_{=\exp(-\nDataCal) +o(1)}
    \\ \times &
        \underbrace{(\nDataCal-j+\nDataNew(1-\covFraction)-\delta)^{\nDataCal-j}}_{= \nDataNew^{\nDataCal-j}(1-\covFraction)^{\nDataCal-j}+o(\nDataNew^{\nDataCal-j})}
        \underbrace{(j+\covFraction\nDataNew+\delta-1)^{j}}_{ =\nDataNew^{j}\covFraction^{j}+o(\nDataNew^{j})}
        \underbrace{(\nDataCal+\nDataNew)^{-\nDataCal}}_{ =\nDataNew^{-\nDataCal}+o(\nDataNew^{-\nDataCal})}
    \\ \times &
        \underbrace{\exp\left[
        (12(\nDataCal-j+\nDataNew(1-\covFraction)-\delta)+Q)^{-1}
        +(12(j+\nDataNew\covFraction+\delta-1)+Q)^{-1}
        +(12\nDataNew+Q)^{-1}
        \right]}_{=1+o(1)}
    \\ \times &
        \underbrace{\exp\left[
        -(12(\nDataNew (1-\covFraction)-\delta)+(1-Q))^{-1}
        -(12(\nDataNew\covFraction-1+\delta)+(1-Q))^{-1}
        -(12(\nDataCal+\nDataNew)+(1-Q))^{-1}
            \right]}_{=1+o(1)}
\end{align*}
By calculus of little-oh notation we find
\[h(\nDataNew,Q) = \exp(n-j)\exp(j)\exp(-n)\nDataNew^{\nDataCal-j}\nDataNew^{j}\nDataNew^{-\nDataCal}\covFraction^{j}(1-\covFraction)^{\nDataCal-j} + o(1) \]
which in turn means
\[\lim_{\nDataNew\to\infty} h(\nDataNew,Q) =\covFraction^{j}(1-\covFraction)^{\nDataCal-j}. \]

We have thus shown that the limit of \eqref{eq:a for tolerance lal} is
\[ \lim_{\nDataNew\to\infty} a(k)  = \sum_{j=k}^{\nDataCal} \binom{\nDataCal}{j}\covFraction^{j}(1-\covFraction)^{\nDataCal-j}. \]
By recognizing the binomial cumulative distribution function (and denoting it with the symbol $\operatorname{BIN}(\cdot;\nDataCal,\covFraction)$) we see 
\[ \lim_{\nDataNew\to\infty} a(k)  = 1-\operatorname{BIN}(k-1;\nDataCal,\covFraction). \]
Finally
\[k^{\star} = \min \left\{ k \in \{1,\dots{},n+1 \} \middle|   \lim_{\nDataNew\to\infty}  a(k)     \leq \confLevel \right\} = 1+\operatorname{BIN}^{-1}(1-\confLevel;\nDataCal,\covFraction) \]

\section{Extended Distribution Shift experiment}\label{appendix:experiment:dist shift extras}

Continuing on the experiment on distribution shift, from Sec.~\ref{sec:example:distshift}. We keep $\nDataCal=30$, and also fix the random seed across runs to make the comparison clear. We generate calibration data sets $\dataSetCal$ for three scenarios: 

\begin{description}[style=nextline]
    \item[Scenario 1, Increasing mean] The mean $\mu$ shift from $0.5$ (no shift) to $3$ (large shift), and the standard deviation $\sigma=0.5$ (no shift).
    \item[Scenario 2, Increasnig variance] The mean $\mu=0.5$ (no shift), and the standard deviation shifts from $\sigma=0.5$ (no shift) to $2$ (large shift)
    \item[Scenario 3, Decreasing variance] The mean $\mu=0.5$ (no shift), and the standard deviation shifts from $\sigma=0.5$ (no shift) to $0.05$ (large shift)
\end{description}
These three series of calibration-sets have their \LAL{}-curves plotted; see Fig.~\ref{fig:more dist shift}. 

\begin{figure*}[!t]
\centering
\subfloat[\LAL{} curves for distribution shift in Scenario 1, Increasing mean]{
    \includegraphics[width=2.5in]{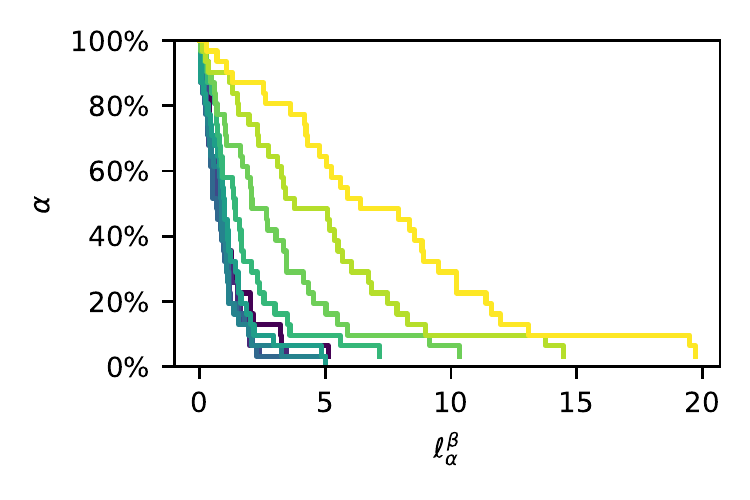}
    \label{fig:more dist shift:mean}
}
\hfil
\subfloat[\LAL{} curves for distribution shift in Scenario 2, Increasnig variance]{
    \includegraphics[width=2.5in]{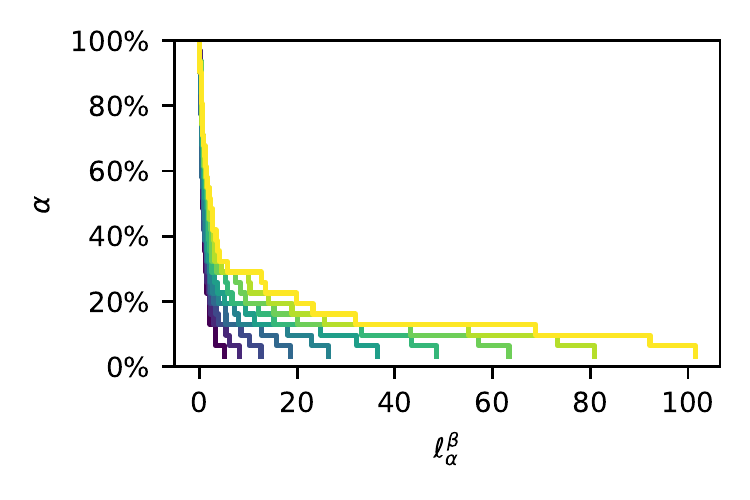}
    \label{fig:more dist shift:std}
}
\hfil
\subfloat[\LAL{} curves for distribution shift in Scenario 3, Decreasing variance]{
    \includegraphics[width=2.5in]{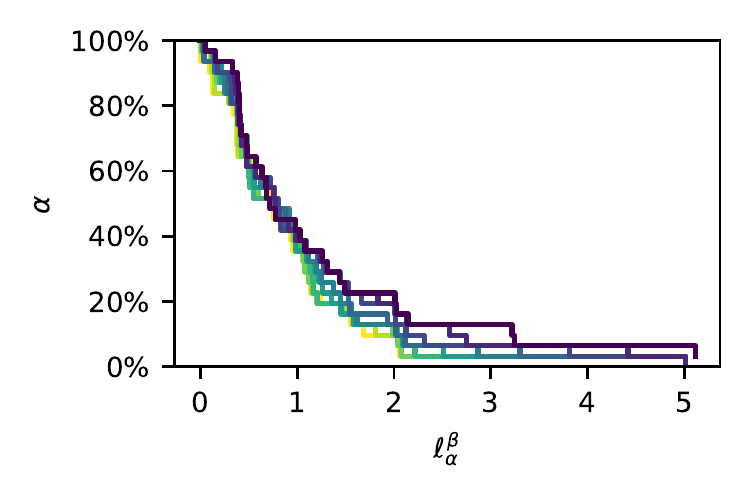}
    \label{fig:more dist shift:std_r}
}
      
\caption{\LAL{}-curves are drawn for various calibration sets $\dataSetCal$ under distributions shift, as described in Appendix~\ref{appendix:experiment:dist shift extras}. Dark blue line = no shift. Yellow line = large shift. 
}
\label{fig:more dist shift}
\end{figure*}

The results of Scenario 1 show that as the mean shift increase, model performance generally deteriorate. One detail should be noted for moderate mean-shifts. Some of the outlier data points with $x<0.5$ contributed to a large \LAL{} when there is no shift, but with increasing shift, these data points are moved towards $0.5$, where model performance is generally better. This effect means that the \LAL{} decreases somewhat, as can be observed for $\confLevel \approx 10\%$. When the mean-shift increases, outliers with $x>0.5$ produce larger losses, which is observed through the \LAL{}-curve moving to the right. A large mean-shift moves most of the calibration set to $\features$ where model performance is worse, as indicated by the whole \LAL{}-curve shifting to the right, not only for selected $\confLevel$.

The results of Scenario 2 show that the tail of the \LAL{}-curve increase with feature variance, indicating that model performance on outliers get worse. Inliers are not as much affected.

The results of Scenario 3 show that the \LAL{}-curve moves to the left with decreasing feature variance, indicating smaller losses. This means that the data concentrates in regions of feature space where the model performs well. We do observe some distribution shift, but this model still performs well on the data.

Taken together, this indicate that the \LAL{} can indicate distribution shift, that it only detects shifts relevant for model performance, and that it indicates whether the shift incurs model degradation generally, or only on outliers.

\end{document}